\documentclass[11pt]{article}

\usepackage[final]{acl}

\usepackage{times}
\usepackage{latexsym}
\usepackage[T1]{fontenc}
\usepackage[utf8]{inputenc}
\usepackage{microtype}
\usepackage{inconsolata}
\usepackage{mathtools}
\usepackage{amsmath, amssymb, amsthm}
\usepackage{graphicx}
\graphicspath{{./}{./figures/}{./figures/no_intervention/}{./figures/intervention1/}{./figures/intervention2/}{./figures/intervention3/}{./figures/intervention4/}{./figures/intervention5/}{./figures/intervention5_2/}{./figures/obs1_appendix/}{./figures/obs2_appendix/}}
\usepackage{cleveref}
\usepackage{enumitem} 
\usepackage{pgfplots}
\pgfplotsset{compat=1.17}
\usepackage{caption}
\usepackage{subcaption} 
\usepackage{float}
\usepackage{xcolor}
\usepackage{bbm}
\newif\ifYRMcomments
\newif\ifBacklogcomments
\newif\ifResolvedcomments

\Backlogcommentsfalse

\YRMcommentstrue 
\Resolvedcommentstrue

\newtheorem{theorem}{Theorem}
\newtheorem{lemma}{Lemma}

  \title{Attention Sinks Are Provably Necessary in Softmax Transformers:\\ Evidence from Trigger-Conditional Tasks}

  \author{Yuval Ran-Milo\\
    Tel Aviv University\\
    \texttt{yuvalmilo@mail.tau.ac.il}
  }

  \date{}

\begin{document}
  \maketitle
  \thispagestyle{plain} 

  \begin{abstract}
      Transformers often display an \emph{attention sink}: probability mass concentrates on a fixed, content-agnostic position. 
      Are sinks a byproduct of the optimization/training regime? Or are they sometimes functionally necessary in softmax Transformers? We prove that, in some settings, it is the latter: computing a simple trigger-conditional behavior \emph{necessarily} induces a sink in softmax self-attention models. 
      Our results formalize a familiar intuition: normalization over a probability simplex must force attention to collapse onto a stable anchor to realize a default state (e.g., when the model needs to ignore the input).
      We instantiate this with a concrete task: when a designated trigger token appears, the model must return the \emph{average of all preceding token representations}, and otherwise output zero, a task which mirrors the functionality of attention heads in the wild \citep{barbero2025llmsattendtoken,Guo2024ActiveDormantAH}. 
      We also prove that non-normalized ReLU attention can solve the same task without any sink, confirming that the normalization constraint is the fundamental driver of sink behavior. 
      Experiments validate our predictions and demonstrate they extend beyond the theoretically analyzed setting: softmax models develop strong sinks while ReLU attention eliminates them in both single-head and multi-head variants.
    \end{abstract}

  \section{Introduction}\label{sec:intro}
  Transformers \cite{vaswani2023attentionneed} frequently concentrate attention on an early position in a way that is largely insensitive to content.
  This \emph{attention sink} has been reported for small and large models alike \citep{xiao2023efficient,gu2025when,Guo2024ActiveDormantAH}.
  It occurs under a variety of positional schemes—absolute/learned embeddings, ALiBi, RoPE, and even without explicit positional encodings \citep{press2021train,su2021roformer,gu2025when}—and similar behavior shows up in multimodal and vision settings, as well as in diffusion language models \citep{Kang2025See,wang2025mirage,Feng2025EDIT, rulli2025dlmsinks}.
  The breadth of contexts points to a pervasive pattern, not a peculiarity of any single model or training regime.

  This pattern has significant practical consequences.
  When probability mass concentrates on a fixed position, attention can be diverted away from other tokens and downstream accuracy can be affected \citep{Yu2024Unveiling}.
  Sinks can also worsen numerical issues relevant to compression and quantization \citep{sun2024massive,lin2024duquant,bondarenko2023quantizable,son-etal-2024-prefixing}, distort attention-based interpretability analyses \citep{Guo2024ActiveDormantAH}, and complicate streaming and long-context inference \citep{xiao2023efficient}.
  Analogous sink effects have also been documented in vision and multimodal settings, where they waste representational capacity on irrelevant visual tokens \citep{Kang2025See,wang2025mirage,Feng2025EDIT}.  (See Appendix \ref{sec:appendix-sink-impact} for an extended discussion on the practical motivations for mitigating attention sinks.)
  
  Why is sink behavior so common?
  One plausible account is an \emph{inductive bias}—a phenomenon documented in other settings \cite{soudry2024implicitbiasgradientdescent,arora2019implicitregularizationdeepmatrix, ranmilo2026outcomebasedrlprovablyleads}—
  whereby the learning setup (model class and optimization procedure) steers solutions toward models that exhibit attention sinks, even when sink-free alternatives exist.
  In this work we argue that, in certain settings, this isn't the case, and sink behavior is \emph{functionally essential}: all models that successfully compute a natural class of functions must exhibit sinks.\footnote{We do not claim sinks are unavoidable in all architectures (e.g., sinks do not appear in gated attention or Mamba-based models \cite{qiu2025gatedattentionlargelanguage, endy2025mambaknockoutunravelingfactual}). Rather, we prove they are a necessary consequence of softmax attention.}

    We investigate this claim theoretically by introducing a \emph{trigger-conditional task}: a model must output the mean of past tokens at a designated trigger position, and output zero (a no-op) everywhere else.
    This formulation captures the core mechanism of empirically observed attention heads ``in the wild'' \citep{barbero2025llmsattendtoken,Guo2024ActiveDormantAH} which aggregate context when triggered and use a sink to remain dormant otherwise (see \cref{sec:related} for more details).
    We prove that attention sinks are necessary for softmax attention to solve this task.
  Specifically, we consider a synthetic, trigger-conditional task on sequences in which each token representation consists of: (i) a \emph{BOS indicator} equal to one only for the first token; (ii) a \emph{trigger indicator} equal to one only at the trigger position; (iii) a \emph{non-trigger non-BOS indicator} equal to one for all remaining tokens; and (iv) i.i.d.\ samples from a continuous distribution in the content coordinates.
  The target is intuitive: the model writes nothing to the residual stream at every position (i.e., outputs the zero vector), except at the unique trigger position where it should write the \emph{mean of all preceding non-\texttt{BOS}~\footnote{We exclude \texttt{BOS} from the average because it contains no 
  input-dependent content.} token vectors}.

  Our main results are necessity theorems for \emph{softmax} self-attention: for single-layer models (\cref{thm:main}), any model that achieves vanishing error on this task must place attention arbitrarily close to~$1$ (the maximal possible value) on a \emph{fixed sink token} (the \texttt{BOS} token) at \emph{all} non-trigger positions; for multi-layer models (\cref{thm:multilayer}), we show that at least one layer must exhibit sink behavior at some non-trigger position~\footnote{\label{fn:multilayer-sinks}Indeed, we empirically see in \cref{sec:experiments} that sinks do form, but not in all positions and layers (see \cref{fig:no-sink-head}).}.
  At a high level, we formalize a widely held intuition: normalization of attention scores forces the model to concentrate probability mass on a stable anchor whenever it needs to produce a default output, independent of the variable input content.
  We complement these necessity theorems with a constructive result (\cref{thm:relu}): ReLU attention can solve the same task with zero attention on the \texttt{BOS} token, demonstrating that the normalization constraint is the primary driver of sink formation.

  Experiments on both single-layer and multi-layer models provide supporting evidence (\cref{sec:experiments}).
  Single-layer softmax Transformers trained on the task develop attention sinks with near-unit mass on \texttt{BOS} when no trigger is present, aligning with our theoretical analysis.
  Swapping softmax for ReLU attention eliminates sink formation while preserving task accuracy, confirming that the softmax normalization constraint---rather than the task structure or optimization dynamics---is the fundamental driver of the sink behavior.
  We observe these patterns across both single-layer and deeper multi-head multi-layer architectures, demonstrating that our theoretical insights capture fundamental properties of normalization-based attention mechanisms.

  Overall, our contributions are as follows:
  \begin{enumerate}[itemsep=0.15em]
    \item We introduce a trigger-conditional task that models the mechanism of attention heads observed ``in the wild'' \citep{barbero2025llmsattendtoken,Guo2024ActiveDormantAH} (\cref{sec:task-def}). 
    \item We prove that any single-layer softmax attention model achieving vanishing error on this task must place nearly all attention on a fixed sink token (the \texttt{BOS} token) at \emph{every} non-trigger position (\cref{thm:main}).
    \item We extend this to multi-layer models, showing that at least one layer must place nearly all attention on the \texttt{BOS} token at \emph{some} non-trigger position\textsuperscript{\ref{fn:multilayer-sinks}} (\cref{thm:multilayer}).
    \item We show that the softmax normalization is the driver of sink formation by showing the existence of a ReLU attention model that perfectly solves the same task without any sink formation (\cref{thm:relu}).
  \end{enumerate}

  \section{Sinks Empirically Enable No-Op Behaviors in Real Models}\label{sec:related}
  \begin{figure}[!t]
    \centering
    \includegraphics[width=0.67\columnwidth]{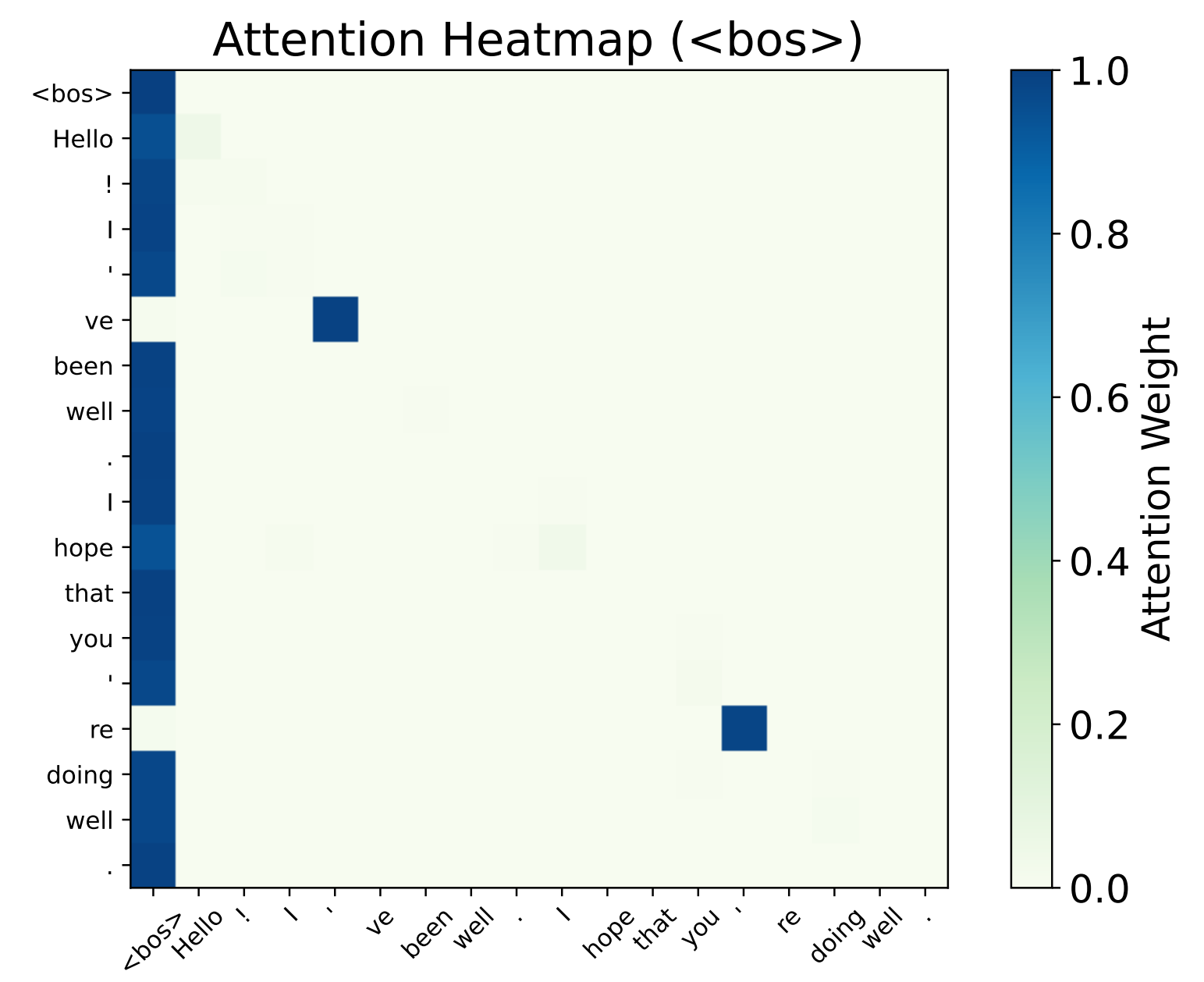}
      \caption{Reproduced from \citet{barbero2025llmsattendtoken}\protect\footnotemark: an attention head that fires on an apostrophe trigger and otherwise attends to \texttt{BOS}.}
    \label{fig:apostrophe}
  \end{figure}
  \footnotetext{\label{fn:apost}Licensed under Creative Commons Attribution 4.0 (CC BY 4.0). Minor cropping for layout; no other changes. License: \url{https://creativecommons.org/licenses/by/4.0/}.}
  
    In realistic empirical settings, attention sinks frequently appear in attention heads implementing a no-op behavior in the absence of specific triggers.
  \citet{barbero2025llmsattendtoken} demonstrate this directly: their case study of an ``apostrophe head'' in Gemma~7B shows two operating modes—firing on apostrophe triggers and otherwise attending to \texttt{BOS} as a default no-operation (no-op) (\cref{fig:apostrophe})\textsuperscript{\ref{fn:apost}}.
  Similarly, \citet{Guo2024ActiveDormantAH} document an active–dormant head in Llama~2–7B that switches between active computation on code-like inputs and dormant sink behavior on text-like inputs (\cref{fig:git})\textsuperscript{\ref{fn:git}}.
  \begin{figure}[!t]
    \centering
    \includegraphics[width=1\columnwidth]{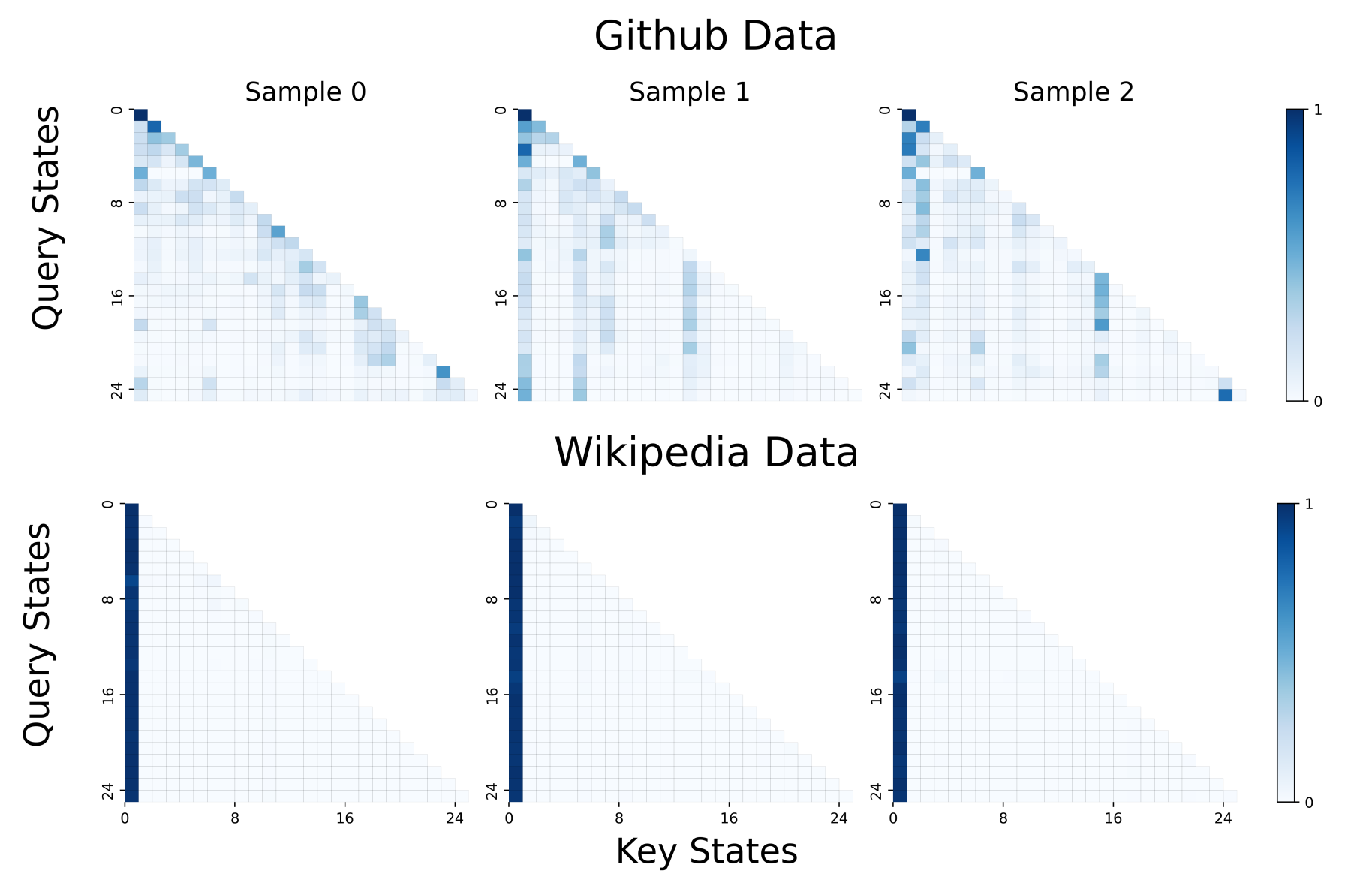}
      \caption{Reproduced from \citet{Guo2024ActiveDormantAH}\protect\footnotemark: an active--dormant attention head in Llama~2--7B. On code-like inputs (GitHub, top), the head exhibits diverse attention patterns; on text-like inputs (Wikipedia, bottom), it collapses to an attention sink on position~0.}
    \label{fig:git}
  \end{figure}
  \footnotetext{\label{fn:git}Reproduced with written permission of the authors from \url{https://github.com/GuoTianYu2000/Active-Dormant-Attention}.}
  Notably, \citet{Guo2024ActiveDormantAH} report that sink behavior diminishes under certain non-softmax/activation variants; in particular, replacing softmax with ReLU attention eliminates sinks, consistent with our theoretical result (\cref{thm:relu}).
  
    These works complement our theoretical perspective.
  \citet{barbero2025llmsattendtoken} argue that sinks enable controlled information mixing, with \texttt{BOS} serving as a stable anchor.
  \citet{Guo2024ActiveDormantAH} analyze the training dynamics behind sink formation—how these patterns emerge during optimization.
  In contrast, our work establishes a theoretical necessity of sink behavior in softmax attention and its absence in ReLU attention via expressiveness analyses \emph{regardless of optimization and training schemes}.
  We include illustrative figures from \citet{barbero2025llmsattendtoken} (\cref{fig:apostrophe})\textsuperscript{\ref{fn:apost}} and \citet{Guo2024ActiveDormantAH} (\cref{fig:git})\textsuperscript{\ref{fn:git}} to highlight that our synthetic task captures key aspects of real sink behavior—sinks emerge to implement a no-op when no trigger fires. 
  See Appendix~\ref{sec:related_extended} for more related works.
  
  \section{Theory and Results}\label{sec:theory}
  We now set up our analysis.
  We introduce the task in \cref{sec:task-def}, explain why this task is meaningful and how its assumptions match realistic modeling in \cref{sec:task-motivation}, introduce the model architectures in \cref{sec:attention-weights}, and state our main necessity claims in \cref{sec:main-result}.
  
  \subsection{Notation and Setup}
  We write $\mathbb{R}_{>0}$ for the positive reals and $\mathbb{N}_{\geq k}$ for the natural numbers at least $k$.
  We use $\mathbbm{1}\{\cdot\}$ for the indicator function and denote $[k]=\{1,\dots,k\}$. Let $n\in\mathbb{N}_{\geq 5}$ be the input dimension and $L \in \mathbb{N}_{\geq 4}$ denote the sequence length. We write sequences as $\mathbf{x}=(\mathbf{x}^{(1)},\ldots,\mathbf{x}^{(L)})^\top \in \mathbb{R}^{L \times n}$ with tokens vectors $\mathbf{x}^{(i)}\in\mathbb{R}^{n \times 1}$.
  
  \subsection{Task Definition}\label{sec:task-def}
    We define a synthetic task designed to capture the mechanism of attention sinks ``in the wild''.
  Empirical studies show that attention heads in LLMs frequently implement trigger-conditional behavior: they aggregate context upon detecting a specific trigger, and attend to a sink token to effectively ``switch off'' otherwise \citep{barbero2025llmsattendtoken,Guo2024ActiveDormantAH} (see \cref{sec:related} for more details).
  Our task isolates this structure: the model must detect a trigger token and, \emph{only at the trigger position}, write to the residual stream the mean of prior content, and write the zero vector at all other positions.

  \begin{figure*}[!t]
    \centering
    \includegraphics[width=1\textwidth]{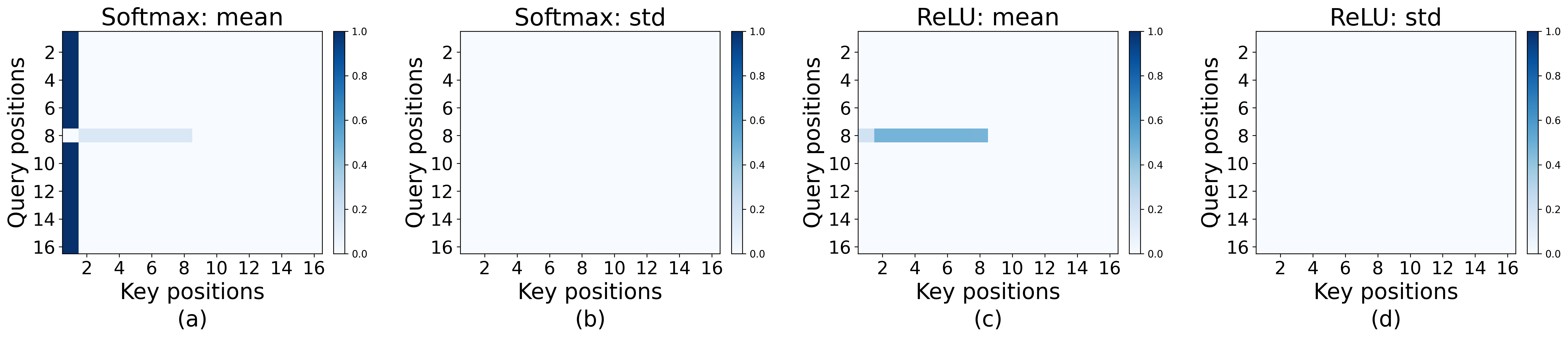}
    \caption{\textbf{Validation of \cref{thm:main} and \cref{thm:relu}.} (a) Mean attention weights for softmax attention across 1000 test examples with trigger at position 8. Dark regions indicate high attention mass concentrated on \texttt{BOS} (position 1) at non-trigger positions. (b) Standard deviation of softmax attention weights shows negligible variance, confirming stable sink behavior. (c) Mean attention weights for ReLU attention show no sink formation—attention on \texttt{BOS} remains near zero. (d) Standard deviation for ReLU attention confirms consistent behavior across examples.}
    \label{fig:experiments}
  \end{figure*}

  \subsubsection{Input Distribution} 
  Input tokens lie in $\mathbb{R}^n$ (for some $n \in \mathbb{N}_{\geq 5}$) and consist of four coordinate types: 
  (i) a \emph{\texttt{BOS} indicator} (coordinate~1), equal to one only for the first token; 
  (ii) a \emph{trigger indicator} (coordinate~2), equal to one only for the trigger token; 
  (iii) a \emph{non-trigger non-\texttt{BOS} indicator} (coordinate~3), equal to one for all remaining tokens; and 
  (iv) \emph{content coordinates} ($4 \leq k \leq n$), drawn i.i.d.\ from some continuous distribution  (except for the \texttt{BOS} token, for which the content coordinates are fixed to zero, as this token contains no input-dependent content). 
  
  Formally, we construct our input distribution $\mathcal{D}$ as follows. We sample a \emph{trigger position} $j$ uniformly from $\{2,\ldots,L\}$, and construct a sequence as follows:
  \begin{itemize}[leftmargin=*]
  \item \textbf{Position 1 (\texttt{BOS}):} Coordinate~1 is one; all other coordinates are zero.
  \item \textbf{Position $j$ (Trigger token):} Coordinate~2 is one; coordinates $4\le k\le n$ are i.i.d.\ from some continuous distribution. All other coordinates are zero.
  \item \textbf{Positions $i\neq 1,j$:} Coordinate~3 is one; coordinates $4\le k\le n$ are i.i.d.\ from some continuous distribution. All other coordinates are zero.
  \end{itemize}

  \subsubsection{Target Output} 
  The target output $\mathbf{y}^{(i)}$ is the zero vector $\mathbf{0}$ at all positions except the trigger position $i=j$, where it equals $(j{-}1)^{-1}\sum_{k=2}^{j}\mathbf{x}^{(k)}$, the mean of all preceding non-\texttt{BOS} tokens (including the trigger itself).
  
  \subsubsection{Loss Function}
  Following standard approximation-theoretic formulations in neural-network expressivity theory \cite{alberti2023sumformeruniversalapproximationefficient,kidger2020universalapproximationdeepnarrow}, we evaluate hypotheses using the following worst-case loss:
  $\mathcal{L}(f) \,=\, \sup_{(\mathbf{x},\mathbf{y})\in \text{support}(\mathcal{D})} \max_{i \in [L]} \big\|\mathbf{y}^{(i)} - f(\mathbf{x})^{(i)}\big\|_2$.

  \subsection{Task Motivation and Justification}\label{sec:task-motivation}
    This setup captures a basic and pervasive pattern in sequence modeling: \emph{aggregate context upon a trigger, otherwise perform a no-op}
   \citep{barbero2025llmsattendtoken,Guo2024ActiveDormantAH} (see \cref{sec:related} for more details).
  Our task distills this to its minimal form: detect a trigger and compute the mean of prior content, or, otherwise, output zero.~\footnote{Our analysis applies almost as-is to a broader class of trigger-conditional problems, such as key-query retrieval where a query must extract a specific previous token (e.g., marked by a feature bit) while ignoring others, resembling the apostrophe head in \cref{fig:apostrophe}\textsuperscript{\ref{fn:apost}}. We analyze the averaging task for clarity, leaving the formal characterization of the full class of tasks necessitating sinks to future work.}
  
  The design choices we make are less arbitrary than they may appear.
  Many aspects are without loss of generality: the BOS indicator, the trigger indicator, and the non-trigger non-BOS indicator channels can be any three mutually orthogonal vectors via a change of basis; we fix them to coordinates 1, 2, and 3 for simplicity.
  While having such fixed indicator channels feels somewhat arbitrary, it is a natural way to model position-type information that an MLP layer can easily learn to inject into the residual stream in practice (e.g., by writing a constant vector).

  \begin{figure*}[!t]
    \centering
    \includegraphics[width=0.97\textwidth]{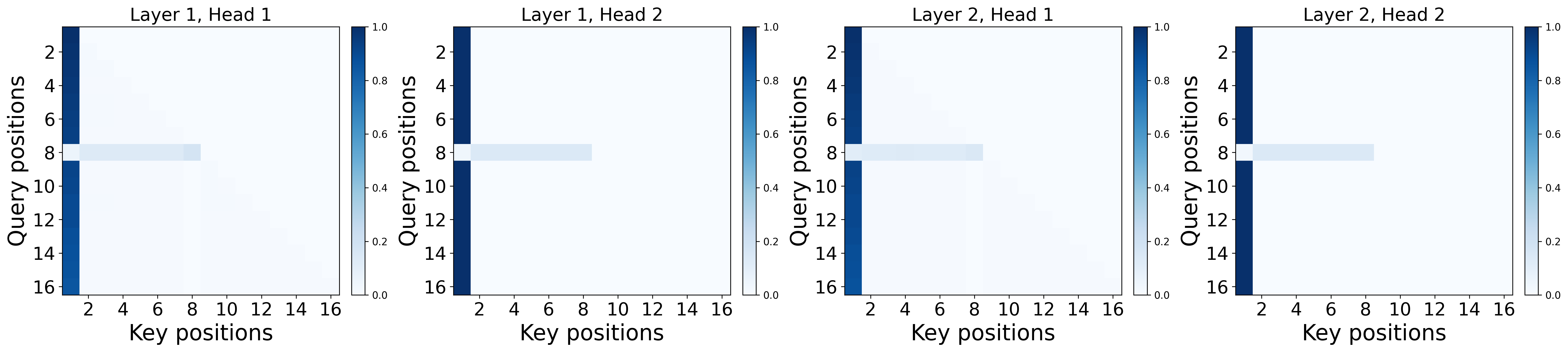}
    \caption{\textbf{Multi-layer multi-head validation.} Attention patterns for a 2-layer 2-head softmax model on a random input (with trigger at position 8). All heads exhibit strong sink behavior.}
    \label{fig:experiments-multilayer}
  \end{figure*}

  \subsection{Model Architecture} \label{sec:attention-weights}
  We study self-attention models with two variants of attention mechanisms.
  We denote the learnable parameter of a single-layer attention model by $\mathbf{W}_Q, \mathbf{W}_K, \mathbf{W}_V, \mathbf{W}_O \in \mathbb{R}^{n \times n}$ for queries, keys, values, and output projection respectively.
  For input sequence $\mathbf{x} = (\mathbf{x}^{(1)}, \ldots, \mathbf{x}^{(L)})^\top \in \mathbb{R}^{L \times n}$, we calculate the attention weights $\alpha_{i,j}$ as defined below for each attention variant (softmax or ReLU). The model output is then computed as $f(\mathbf{x})^{(i)} = \mathbf{W}_O \sum_{j=1}^i \alpha_{i,j} \mathbf{W}_V \mathbf{x}^{(j)}$.

  \paragraph{Softmax Attention.} The \emph{attention weight} from position $i$ to position $j \leq i$ is given by:
  \begin{align*}
\alpha_{i,j} = \frac{\exp(\mathbf{x}^{(i)} \mathbf{W}_Q \mathbf{W}_K^\top (\mathbf{x}^{(j)})^\top)}{\sum_{k=1}^i \exp(\mathbf{x}^{(i)} \mathbf{W}_Q \mathbf{W}_K^\top (\mathbf{x}^{(k)})^\top)}
  \end{align*}

  \paragraph{ReLU Attention.} For ReLU attention, we replace the softmax normalization with element-wise ReLU. We divide the scores by the number of positions up to the current position $i$, excluding the BOS token~\footnote{
    This scaling is necessary because ReLU attention cannot naturally compute averages: concatenating the input sequence to itself would double the output at the final position while keeping the average the same. 
  Alternatively, we could have defined the task to \emph{sum} over past tokens for the ReLU model instead of \emph{averaging}, which would yield an analogous theorem without requiring scaling.
   Moreover, a similar scaling \emph{would not work for softmax attention}, as our analysis would hold for any such variant.
  }. Namely, if we define $n_{i} = \max\{i-1,\,1\}$, then we have 
  $\alpha_{i,j} = \mathrm{ReLU}(\mathbf{x}^{(i)} \mathbf{W}_Q \mathbf{W}_K^\top (\mathbf{x}^{(j)})^\top)/n_{i}.$

\paragraph{Multi-Layer Attention.} A $D$-layer softmax/ReLU model is the composition $f = f^{(D)} \circ \cdots \circ f^{(1)}$, where each $f^{(d)}$ is a single-layer softmax/ReLU attention model. We denote by $\alpha_{i,j}^{(d)}$ the attention weight at position $i$ attending to position $j$ in layer $d$.

\subsection{Main Result}\label{sec:main-result}
  We are now ready to state our theoretical results.
  Our central contribution is threefold: 
  (i) we establish that an attention sink is \emph{necessary} at \emph{every} non-trigger position for single-layer softmax attention to solve the trigger-conditional task (\cref{thm:main});
  (ii) we prove that in multi-layer softmax attention, at least one position must exhibit sink behavior (\cref{thm:multilayer}); 
  \footnote{Indeed, we empirically see in \cref{sec:exp-multilayer} (e.g., \cref{fig:no-sink-head}) that sinks do form, but not in all positions and layers.}
  and (iii) we prove constructively that ReLU attention can solve the same task \emph{without} any sink behavior (\cref{thm:relu}).
  This contrast directly demonstrates that the softmax normalization constraint—not the task structure or optimization dynamics—is the fundamental driver of attention sinks. 
  
\begin{theorem}[Single-Layer Attention Sink Necessity]\label{thm:main}
For any $\varepsilon, \delta \in \mathbb{R}_{>0}, L \in \mathbb{N}_{\geq 4}, n \in \mathbb{N}_{\geq 5}$, and a bounded probability density function $\mathcal{P}$, there exists a constant $\eta \in \mathbb{R}_{>0}$ such that the following holds. Consider any single-layer softmax attention\footnote{\label{fn:generalization}Our analysis immediately extends to any attention mechanism whose weights $\alpha_{i,j}$ satisfy: (i)~\emph{normalization}---$\sum_{j \leq i} \alpha_{i,j} \geq c$ for some constant $c > 0$; and (ii)~\emph{monotonicity}---inserting an additional key into positions $1,\ldots,i$ does not increase $\alpha_{i,j}$ for any existing key~$j$.} model $f$ with loss $\mathcal{L}(f) \leq \eta$  on sequences with length $L$ and dimension $n$ where content coordinates are drawn from $\mathcal{P}$.\footnote{It is easy to show that such an $f$ exists for any $\eta \in \mathbb{R}_{>0}$.} Then with probability at least $1 - \delta$, for all non-trigger positions $i \neq j$, we have $\alpha_{i,1} \geq 1 - \varepsilon$.
\end{theorem}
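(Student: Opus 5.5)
The plan is to exploit the tension between the trigger and non-trigger positions. The trigger position forces the value/output map to be nearly faithful on the content. The non-trigger positions then need the attention mass on content-bearing tokens to be nearly zero, so by normalization it must sit on \texttt{BOS}.

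\textbf{Step 1: the output map is nearly faithful on content.} Write $\mathbf{M}=\mathbf{W}_O\mathbf{W}_V$. Take the trigger at $j=2$. Then position $2$ attends only to positions $1$ and $2$, and
\[
f(\mathbf{x})^{(2)}=\alpha_{2,1}\mathbf{M}\mathbf{e}_1+\alpha_{2,2}\mathbf{M}\mathbf{x}^{(2)} .
\]
The target here is $\mathbf{x}^{(2)}$ itself. Varying the content $\mathbf{z}$ of $\mathbf{x}^{(2)}=\mathbf{e}_2+\mathbf{z}$ over the support of $\mathcal{P}$ should force two things.
\begin{itemize}
\item $\alpha_{2,2}\mathbf{M}$ is close to the identity on content directions, up to an error that is $O(\eta)$ relative to the spread of the support.
\item $\alpha_{2,2}$ is bounded below, hence $\mathbf{M}$ is not degenerate on content.
\end{itemize}
I would try first to make this quantitative: there exist constants $c_0,\kappa>0$, depending only on $n,L,\mathcal{P}$, such that $\|\mathbf{M}\mathbf{v}\|\ge c_0\|\mathbf{v}\|$ for content vectors $\mathbf{v}$, at least modulo an $O(\eta)$ slack. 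Because the distribution is continuous, the support contains points in general position, and I would use this to control the affine dependence on $\mathbf{z}$. One caveat: if the support is unbounded, extra care is needed, and I would restrict to a bounded high-probability subset. The probability $1-\delta$ in the statement is what allows discarding such sets.

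\textbf{Step 2: non-trigger positions carry little mass on content tokens.} Fix a non-trigger position $i$. Its output must be $\approx\mathbf{0}$, so
\[
\Bigl\|\alpha_{i,1}\mathbf{M}\mathbf{e}_1+\sum_{k=2}^{i}\alpha_{i,k}\mathbf{M}\mathbf{x}^{(k)}\Bigr\|\le\eta .
\]
The aim is to show $\sum_{k\ge2}\alpha_{i,k}\le\varepsilon$ with high probability. The argument is a perturbation one. The attention weights depend smoothly on the content through bilinear scores. Resampling, or perturbing, the content of one token changes $\sum_k\alpha_{i,k}\mathbf{M}\mathbf{x}^{(k)}$ by at least roughly $c_0\alpha_{i,k}\|\Delta\mathbf{z}\|$, minus the effect of the changed weights. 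This requires the output to vary while staying within $\eta$ of $\mathbf{0}$.

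I expect this to be the \textbf{main obstacle}. Score changes could in principle cancel the change in value, for instance if attention shifts between tokens with nearly equal values. To rule this out I would use a probabilistic anti-concentration argument based on the bounded density of $\mathcal{P}$. For a fixed model, the set of contents where a nontrivial affine/analytic combination stays within $\eta$ of zero has small measure, and this bound is uniform because the density is bounded. So if content tokens held mass $>\varepsilon$, then with probability $>\delta$ the output would have norm $>\eta$, contradicting the worst-case loss on the support. Alternatively, the paper's footnote suggests a cleaner route via monotonicity. Compare sequences differing by inserting tokens: since inserted keys can only lower existing weights, the mass on \texttt{BOS} is controlled by configurations of minimal length, which I would analyse directly at $i=3$ or similar.

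\textbf{Step 3: conclude.} By normalization, $\sum_{k\le i}\alpha_{i,k}=1$, so $\alpha_{i,1}\ge1-\varepsilon$. It remains to choose $\eta$ small in terms of $\varepsilon,\delta,c_0$ and the density bound. Then take a union bound over the at most $L$ non-trigger positions, allotting probability $\delta/L$ to each.
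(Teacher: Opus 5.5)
\textbf{There is a genuine gap, and it is the step you flag as the main obstacle: Step~2 is never established.}

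Your primary route relies on the claim that, for a fixed model, the set of contents on which a ``nontrivial'' analytic combination stays within $\eta$ of zero has small measure, uniformly because $\mathcal{P}$ has bounded density. Bounded density only turns small \emph{Lebesgue} measure into small probability. It gives no control on the Lebesgue measure of the sublevel set $\{\mathbf{z}:\|\sum_k\alpha_{i,k}(\mathbf{z})\,\mathbf{M}\mathbf{x}^{(k)}\|_2\le\eta\}$ of a softmax-weighted sum. You also have no quantitative notion of ``nontrivial'', tied to content mass $>\varepsilon$, that would give a bound uniform over all $\mathbf{W}_Q\mathbf{W}_K^\top$ and $\mathbf{M}$. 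Showing that this sublevel set cannot be the whole support \emph{is} the theorem. The cancellation you describe (weight trading off between non-trigger tokens whose values nearly cancel) is left unexcluded.

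Your monotonicity alternative points the wrong way. Inserting keys lowers \emph{every} existing weight, including $\alpha_{i,1}$, so minimal configurations only give \emph{upper} bounds on the \texttt{BOS} mass. Step~2 also never singles out the trigger key at post-trigger positions, whose value is large and needs its own bound.

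\textbf{The missing idea is to apply monotonicity to the content keys, reducing to short prefixes in which nothing can cancel.}
\begin{itemize}
\item For non-trigger $i$, $\alpha_{i,i}$ is at most the self-weight in $(\texttt{BOS},\mathbf{x}^{(i)})$. That prefix has target $\mathbf{0}$, and its \texttt{BOS} term is $O(\eta)$ because $\|\mathbf{M}e_1\|_2\le\eta$ (from position~1). Hence $\|\alpha_{i,i}\mathbf{M}\mathbf{x}^{(i)}\|_2\le2\eta$.
\item For non-trigger $h<i$, $\alpha_{i,h}\le\widetilde\alpha_{3,2}$ computed on $(\texttt{BOS},\mathbf{x}^{(h)},\mathbf{x}^{(i)})$, where the self term is already controlled. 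This gives $\|\alpha_{i,h}\mathbf{M}\mathbf{x}^{(h)}\|_2\le4\eta$.
\item Post-trigger, the same argument on $(\texttt{BOS},\mathbf{t},\mathbf{x}^{(i)})$, combined with $\|\mathbf{M}\mathbf{t}\|_2\ge1-2\eta$, gives $\alpha_{i,j}\le4\eta/(1-2\eta)$.
\end{itemize}

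With these product bounds your Step~1 would close the proof directly. If $\|\mathbf{M}\mathbf{v}\|_2\ge c_0\|\mathbf{v}\|_2$ on content directions, then the set of contents with $\|\mathbf{M}(e_3+\mathbf{z})\|_2<r$ has diameter at most $2r/c_0$. Its probability is therefore $O(\|\mathcal{P}\|_\infty^{n-3}(r/c_0)^{n-3})$, and a union bound over positions gives $\alpha_{i,1}\ge1-O(L\eta/r)$ with probability $\ge1-\delta$.

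That would be a genuinely different, direct and quantitative route from the paper's. The paper argues by contradiction. The product bounds make $\|\mathbf{M}\mathbf{z}\|_2=O(\eta/\gamma)$ on a positive-probability set, and an axis-separated pair is extracted from that set via bounded density. Transplanting the pair before a trigger at position~3 and projecting onto $(\mathbf{M}\mathbf{t})^\perp$ then yields the contradiction. Post-trigger positions are reduced to pre-trigger ones by moving the trigger to the end.

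\textbf{One correction to Step~1.} A lower bound on $\alpha_{2,2}$ cannot be extracted from position~2 alone. Take query--key form $b\,e_2e_2^\top$: then $\alpha_{2,2}=\sigma(b)$ is an arbitrary constant. Any $\mathbf{M}$ equal to $\sigma(b)^{-1}\mathbf{I}$ on $\mathrm{span}(e_2,e_4,\dots,e_n)$ and killing $e_1$ fits that target exactly.

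You do not need the lower bound, however. The fact that $\alpha_{2,2}\le1$ gives $\mathbf{M}(e_2+\mathbf{z})\approx a(\mathbf{z})(e_2+\mathbf{z})$ with $a\ge1$. A fixed, well-conditioned configuration of support points then forces $\mathbf{M}\approx\bar a\,\mathbf{I}$ with $\bar a\gtrsim1$ on that span.
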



\begin{figure*}[!t]
  \centering
  \includegraphics[width=\textwidth]{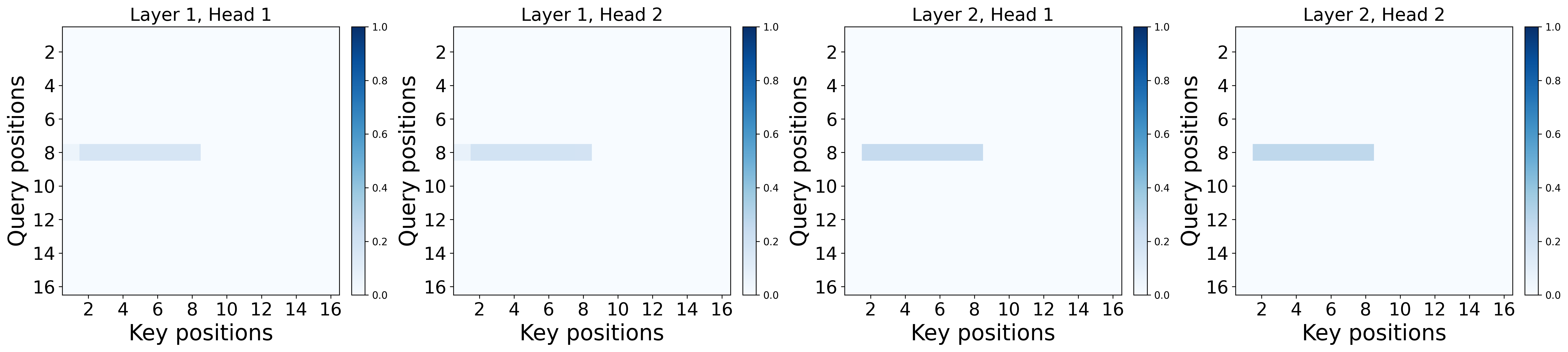}
  \caption{\textbf{ReLU attention: 2-layer 2-head model.} Attention patterns on a single test input (trigger at position 8). No sink formation occurs in any head; attention on \texttt{BOS} remains near zero throughout.}
  \label{fig:relu-2h2d}
\end{figure*}

\begin{proof}[Proof sketch (full proof in \cref{proof:main})]
  Suppose for contradiction that $\alpha_{i,1}\le 1-\varepsilon$ at some non-trigger position $i$ with probability at least $\delta>0$, even as $\eta:=\mathcal{L}(f)\to 0$. 
  On this event a constant amount of attention mass falls on non-\texttt{BOS} tokens; by pigeonhole there exist indices $i_0,h_0$ and a constant $\gamma>0$ such that $\alpha_{i_0,h_0}\ge\gamma$ on a positive-measure set.

  Since every non-trigger position must output $\mathbf{0}$ with error at most $\eta$, and adding more keys can only decrease any fixed softmax weight, one can reduce to short prefixes and show that whenever $h\le i$ are both non-trigger positions, $\|\alpha_{i,h}\mathbf{V}\mathbf{x}^{(h)}\|_2\le O(\eta)$. On the positive-measure set where $\alpha_{i,h}\ge\gamma$, this gives $\|\mathbf{V}\mathbf{x}^{(h)}\|_2= O(\eta/\gamma)$: the value map must crush a positive-probability set of non-trigger tokens.

  By bounded density and independence of the content coordinates, for every content coordinate $m\ge 4$ this crushed set contains two tokens $\mathbf{z},\mathbf{z}'$ that agree on all coordinates except $m$, where they differ by at least a constant. 
  Transplant them into two sequences with trigger at position $3$: $(\texttt{BOS},\mathbf{z},\mathbf{t})$ and $(\texttt{BOS},\mathbf{z}',\mathbf{t})$. The targets at the trigger position differ by $\tfrac{1}{2}(\mathbf{z}-\mathbf{z}')$, which has a $\Omega(1)$ component along $e_m$. The prediction at position $3$ is $\hat{\mathbf{y}}^{(3)}(\mathbf{z})=\alpha_{3,1}\mathbf{V}e_1+\alpha_{3,2}\mathbf{V}\mathbf{z}+\alpha_{3,3}\mathbf{V}\mathbf{t}$; the first two terms are $O(\eta)$ by the crushing bound, and the third lies in the span of the fixed vector $\mathbf{v}:=\mathbf{V}\mathbf{t}$. Projecting onto $\mathbf{v}^\perp$ removes the trigger contribution entirely, so the two projected predictions are $O(\eta)$-close, while the projected targets remain $\Omega(1)$-apart (choosing $m$ so $e_m$ has a nontrivial component in $\mathbf{v}^\perp$). This contradicts $\eta\to 0$.
  \end{proof}

\begin{theorem}[Multi-Layer Attention Sink Necessity]\label{thm:multilayer}
For any $\varepsilon, \delta \in \mathbb{R}_{>0}, L \in \mathbb{N}_{\geq 4}, n \in \mathbb{N}_{\geq 5}$ and a bounded probability density function $\mathcal{P}$, there exists a constant $\eta \in \mathbb{R}_{>0}$ such that the following holds. Consider any $D$-layer softmax attention\textsuperscript{\ref{fn:generalization}} model $f$ with loss $\mathcal{L}(f) \leq \eta$ on sequences with length $L$ and dimension $n$ where content coordinates are drawn from $\mathcal{P}$.\footnote{It is easy to show that such an $f$ exists for any $\eta \in \mathbb{R}_{>0}$.} Then over all inputs with trigger position $j \geq 3$, with probability at least $1 - \delta$, there exists at least one layer $d \in \{1,\ldots,D\}$ and a non-\texttt{BOS} non-trigger position $i \neq j$ such that  $\alpha_{i,1}^{(d)} \geq 1 - \varepsilon$.
\end{theorem}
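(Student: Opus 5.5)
We argue by contradiction, following the template of \cref{thm:main}. Suppose that for some $\varepsilon,\delta>0$ there are $D$-layer models $f$ with $\mathcal{L}(f)\to 0$ such that, with probability at least $\delta$ over inputs with trigger position $j\ge 3$, \emph{every} layer $d$ and \emph{every} non-\texttt{BOS} non-trigger position $i$ satisfy $\alpha^{(d)}_{i,1}<1-\varepsilon$. On this event every such position, in every layer, puts mass at least $\varepsilon$ on non-\texttt{BOS} keys. The depth enters only through the residual stream: the layer-$d$ inputs $\mathbf{h}^{(d-1)}$ are no longer the clean tokens. We therefore track the representations layer by layer and show that, on the bad event, the non-trigger prefix is ``crushed'' at the top layer while the trigger output still has to depend on it.

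The first step is an \emph{inductive crushing lemma}. Since the non-trigger outputs must be $O(\eta)$, we look at short prefixes $(\texttt{BOS},\mathbf{z})$ and $(\texttt{BOS},\mathbf{z},\mathbf{z}')$ of non-trigger tokens. By causality the outputs at positions $1,2$ do not depend on later tokens, so any such prefix can be embedded in a length-$L$ sequence with the trigger placed later. Two facts then constrain the last layer. First, $\mathbf{W}_O^{(D)}\sum_k\alpha^{(D)}_{i,k}\mathbf{W}_V^{(D)}\mathbf{h}^{(D-1),(k)}$ is $O(\eta)$ at positions $1$ and $2$. Second, position $1$ attends only to itself, so the \texttt{BOS} value term alone is $O(\eta)$. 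Subtracting it, and using $\alpha^{(D)}_{2,2}\ge\varepsilon$ on the bad event (position $2$ has only keys $1,2$), gives $\|\mathbf{W}_O\mathbf{W}_V\mathbf{h}^{(D-1),(2)}\|=O(\eta/\varepsilon)$ on a positive-measure set of $\mathbf{z}$. The monotonicity property of softmax (inserting keys only lowers weights) extends the same bound to longer non-trigger prefixes, exactly as in \cref{thm:main}.

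The second step is a \emph{transplant argument} at the top layer. As in the sketch of \cref{thm:main}, bounded density and independence give two tokens $\mathbf{z},\mathbf{z}'$ in the crushed set that differ by $\Omega(1)$ in a single content coordinate $m$. We place them in sequences $(\texttt{BOS},\mathbf{z},\mathbf{t},\dots)$, with the trigger at position $3$ (hence the hypothesis $j\ge3$). The targets at position $3$ differ by $\tfrac12(\mathbf{z}-\mathbf{z}')$. The top-layer output at position $3$ is $O(\eta)$ from the \texttt{BOS} and position-$2$ terms, plus $\alpha_{3,3}\mathbf{W}_O\mathbf{W}_V\mathbf{h}^{(D-1),(3)}$.

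This is where the main obstacle lies. Unlike in one layer, $\mathbf{h}^{(D-1),(3)}$ now depends on $\mathbf{z}$ through lower-layer attention, so projecting onto $\mathbf{v}^\perp$ no longer works directly. I would handle this by induction downward over layers, proving that on the bad event each intermediate representation at a non-trigger position retains no more than $O(\eta)$ information about the content passed forward. Concretely, I would show that the map $\mathbf{z}\mapsto\mathbf{h}^{(d),(3)}$ restricted to the crushed set is $O(\eta)$-Lipschitz-flat along $e_m$, using that each layer is linear in values with weights in the simplex.

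If that induction fails, I would fall back on a rank argument. Since $\mathbf{t}$ is fixed, the trigger contribution at every layer is a convex combination of finitely many vectors, each controlled by the crushed prefix. The output difference can then be confined to a subspace of dimension less than $n-3$, which requires $n\ge5$. Choosing $m$ so that $e_m$ has a component orthogonal to this subspace yields the $\Omega(1)$ versus $O(\eta)$ contradiction.
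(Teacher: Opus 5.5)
Your contradiction setup, the crushing at position $2$, and the transplant to trigger position $3$ follow the paper's template. However, the step you flag as the main obstacle is left open, and neither proposed fix closes it.

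The missing idea is that the paper's $D$-layer model is a pure composition of attention layers, with no residual connections and no MLPs. It therefore unrolls exactly (Lemma~\ref{lem:multilayer-unroll}): $f(\mathbf{x})^{(i)}=\sum_{k\le i}\beta_{i,k}(\mathbf{x})\,\mathbf{V}\mathbf{x}^{(k)}$, with one fixed linear map $\mathbf{V}=\mathbf{V}^{(D)}\cdots\mathbf{V}^{(1)}$ and row-stochastic coefficients $\beta_{i,\cdot}$. By causality, $\beta_{2,2}=\prod_{d}\alpha^{(d)}_{2,2}\ge\varepsilon^{D}$ on the bad event (Lemma~\ref{lem:beta22-product}).

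This gives two things. First, it crushes the quantity you actually need, namely $\|\mathbf{V}\mathbf{z}\|_2\le 2\eta/\varepsilon^{D}$ for the composed map. Your top-layer bound on $\mathbf{W}_O^{(D)}\mathbf{W}_V^{(D)}\mathbf{h}^{(D-1),(2)}$ does not control the $\mathbf{z}$-terms hidden inside $\mathbf{h}^{(D-1),(3)}$. Second, it dissolves the obstacle. The output at position $3$ is $\beta_{3,1}\mathbf{V}e_1+\beta_{3,2}\mathbf{V}\mathbf{z}+\beta_{3,3}\mathbf{V}\mathbf{t}$. So all dependence on $\mathbf{z}$ routed through lower-layer attention enters only via the scalar $\beta_{3,3}$, which multiplies the fixed vector $\mathbf{V}\mathbf{t}$. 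Projecting onto $(\mathbf{V}\mathbf{t})^{\perp}$ then works verbatim as in the single-layer proof.

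Your proposed substitute is a downward induction showing that $\mathbf{z}\mapsto\mathbf{h}^{(d),(3)}$ is $O(\eta)$-flat. It cannot be carried out layer by layer. The loss constrains only the composed map, so nothing forces a partial product $\mathbf{V}^{(d)}\cdots\mathbf{V}^{(1)}$ with $d<D$ to crush content. You therefore have no handle for showing that intermediate representations at position $3$ are insensitive to $\mathbf{z}$. Even at the top layer you should not expect flatness: $\beta_{3,3}$ depends on $\mathbf{z}$ in an uncontrolled way and moves the output along $\mathbf{V}\mathbf{t}$. That is exactly why one projects instead of proving flatness.

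The rank fallback points in the right direction, but it never identifies the fixed subspace. That subspace is just $\mathrm{span}(\mathbf{V}\mathbf{t})$, modulo $O(\eta)$ terms, and without naming it the fallback is not yet an argument.

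Finally, your appeal to softmax monotonicity to extend crushing to longer prefixes does not transfer to depth. Inserting a token changes the lower-layer queries and keys, not just the key set. It is also unnecessary, since only position $2$ is used.
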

\vspace{1em}

\begin{proof}[Proof sketch (full proof in \cref{proof:multilayer})]
We unroll the multi-layer network and apply similar reasoning as in \cref{thm:main}: if no layer exhibits sink behavior, the effective attention weights on content tokens remain large, forcing the value map to crush them to zero, which again contradicts the sensitivity required at the trigger position.
\end{proof}

  \begin{theorem}[ReLU Attention Without Sinks]\label{thm:relu}
For any $L \in \mathbb{N}_{\geq 4}$ and $n \in \mathbb{N}_{\geq 3}$, there exists a one-layer ReLU attention model $f$ with loss $\mathcal{L}(f) = 0 $ such that for any input sequence $\mathbf{x}$ with trigger position $j$, and any non-trigger position $i \neq j$ we have $\alpha_{i,1} = 0$.
\end{theorem}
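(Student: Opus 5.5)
The plan is a direct construction: the identity value/output path computes the average, and the query--key product makes every score zero except those from the trigger query to non-\texttt{BOS} keys. Set $\mathbf{W}_V=\mathbf{W}_O=\mathbf{I}_n$ and $\mathbf{W}_K=\mathbf{I}_n$. Take
\[
\mathbf{W}_Q=\mathbf{M}:=e_2\,(e_2+e_3)^\top ,
\]
so that $\mathbf{W}_Q\mathbf{W}_K^\top=\mathbf{M}$, where $e_k$ denotes the $k$-th standard basis vector of $\mathbb{R}^n$. This needs only $n\ge 3$. The score from position $i$ to position $\ell$ is then
\[
s_{i,\ell}=(\mathbf{x}^{(i)})^\top\mathbf{M}\,\mathbf{x}^{(\ell)}=x^{(i)}_2\bigl(x^{(\ell)}_2+x^{(\ell)}_3\bigr).
\]
It depends only on the indicator coordinates and never on the content coordinates, because all rows and columns of $\mathbf{M}$ with index $\ge 4$ vanish.

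First I handle the non-trigger positions $i\neq j$. Here $\mathbf{x}^{(i)}$ is either the \texttt{BOS} token ($i=1$) or a token with coordinate~3 equal to one; in both cases $x^{(i)}_2=0$. Hence $s_{i,\ell}=0$ for every $\ell\le i$, so every $\alpha_{i,\ell}=\mathrm{ReLU}(0)/n_i=0$. In particular $\alpha_{i,1}=0$, which is the claimed absence of a sink, and $f(\mathbf{x})^{(i)}=\mathbf{0}=\mathbf{y}^{(i)}$.

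Next I handle the trigger position $i=j\ge 2$. Now $x^{(j)}_2=1$. For the \texttt{BOS} key, $x^{(1)}_2+x^{(1)}_3=0$, so $\alpha_{j,1}=0$. For each key $\ell\in\{2,\dots,j\}$, exactly one of coordinates 2 and 3 equals one, so $s_{j,\ell}=1$. Since $n_j=j-1$, this gives $\alpha_{j,\ell}=1/(j-1)$. The output is therefore
\[
f(\mathbf{x})^{(j)}=\frac{1}{j-1}\sum_{\ell=2}^{j}\mathbf{x}^{(\ell)},
\]
which is exactly the target, indicator coordinates included. Since the error is zero at every position of every sequence in the support, we get $\mathcal{L}(f)=0$.

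There is no real obstacle here. The only points that need checking are bookkeeping ones. The row-vector convention $\mathbf{x}^{(i)}\mathbf{W}_Q\mathbf{W}_K^\top(\mathbf{x}^{(\ell)})^\top$ must yield the bilinear form above. The normalizer $n_i=\max\{i-1,1\}$ must coincide with the number of non-\texttt{BOS} keys at the trigger position, and it does, since $j\ge2$. Finally, the target average must run over the full token vectors, not just the content coordinates, so that the identity value/output maps reproduce it exactly.
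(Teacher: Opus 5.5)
Your proposal is correct and is the paper's own construction: identity key, value, and output maps with $\mathbf{W}_Q=e_2(e_2+e_3)^\top$. The scores $x^{(i)}_2\bigl(x^{(\ell)}_2+x^{(\ell)}_3\bigr)$ vanish at every non-trigger query and equal $1$ on keys $2,\dots,j$ at the trigger, so the $1/(j-1)$ normalization reproduces the target average exactly, and your bookkeeping (including the \texttt{BOS} query and the vector conventions) is correct.
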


\begin{proof}[Proof sketch (full proof in \cref{proof:relu})]
We provide a simple explicit construction. By choosing query and key weights to align with the trigger indicator coordinate and non-trigger non-BOS indicator coordinate, we ensure that attention scores are equal to some positive constant at the trigger position (where they compute the average) and zero otherwise. Since ReLU does not enforce normalization, the model can output the zero vector by simply having zero attention weights, without needing a sink.
\end{proof}

\section{Experiments}\label{sec:experiments}

 We validate our theoretical predictions on the synthetic trigger-conditional task.
  In \cref{sec:exp-theory}, we train single-layer single-head models to validate \cref{thm:main} and \cref{thm:relu}.
  In \cref{sec:exp-multilayer}, we validate our multi-layer findings (\cref{thm:multilayer}) in more realistic settings by training multi-layer multi-head models with residual connections.
  All experiments use sequences of length $L=16$; training details are in Appendix~\ref{sec:appendix-training}. Code for reproducing our experiments is available at \url{https://github.com/YuvMilo/sinks-are-provably-necessary}.

  \subsection{Single-Layer Models}\label{sec:exp-theory}
  We first validate \cref{thm:main} and \cref{thm:relu} on single-layer single-head models.

  \paragraph{Experiment 1: Softmax Attention Forms Sinks.} 
  \Cref{thm:main} predicts that softmax attention models achieving low loss must have a strong attention sink at all non-trigger positions.
  To test this, we visualize the mean and standard deviation of attention weights across 1000 test examples with trigger position $j=8$ (\cref{fig:experiments}, panels a and b).
  The model places near-unit attention mass on position 1 at every non-trigger position, with negligible variance across examples.

  \paragraph{Experiment 2: ReLU Attention Avoids Sinks.}
  \Cref{thm:relu} establishes that ReLU attention can solve the same task with zero attention on \texttt{BOS}.
  We replace softmax with ReLU attention while keeping all other parameters identical (\cref{fig:experiments}, panels c and d).
  The ReLU model achieves comparable task accuracy without developing sink behavior: attention weights on position 1 remain near zero throughout the sequence. This observation reinforces that sinks are not a byproduct of the task or training dynamics, but a direct consequence of the normalization geometry.

  \subsection{Multi-Layer Multi-Head Models}\label{sec:exp-multilayer}
  \Cref{fig:experiments-multilayer} shows attention patterns for a 2-layer 2-head softmax model: all heads exhibit strong sink behavior across non-trigger positions.
  In deeper models, sinks appear in \emph{some but not all} heads, consistent with \cref{thm:multilayer}, which guarantees existence rather than ubiquity.
  \Cref{fig:no-sink-head} shows an example: in a 4-layer 4-head softmax model that achieves low loss, head~3 in layer~4 places near-zero attention on \texttt{BOS}, while other heads in the same network develop clear sinks (\cref{fig:appendix-softmax-4h4d} in Appendix~\ref{sec:appendix-additional-exp}).
  Finally, replacing softmax with ReLU attention eliminates sink formation entirely in multi-layer models as well: \cref{fig:relu-2h2d} shows that no head of a 2-layer 2-head ReLU model develops a sink, and the same holds for a 4-layer 4-head ReLU model (see \cref{fig:appendix-relu-4h4d} in Appendix~\ref{sec:appendix-additional-exp}).

\section{Conclusions and Practical Implications}\label{sec:implications}

Our results show that for trigger-conditional behaviors, attention sinks are not an optimization artifact but a structural necessity: when a model must maintain a stable default (no-op) output on typical inputs while performing a content-dependent computation upon a recognizable trigger, softmax normalization forces sink formation.
This has direct practical consequences: it can help practitioners distinguish between mitigation strategies that are fundamentally limited and those that address the root cause.

Specifically, sink-removal interventions operating \emph{within} the softmax mechanism may be inherently limited for such computations.
Penalizing BOS attention, spreading attention mass, or post-hoc reweighting may degrade the no-op guarantee, or cause the model to recreate an equivalent anchor elsewhere (a different position, head, or layer).
In this sense, our results provide a principled reason to expect that simply ``fighting'' sinks without relaxing the simplex constraint can be counterproductive for trigger-conditional circuits: the sink may be the very mechanism that makes the circuit possible.

At the same time, the contrast with ReLU attention (\cref{thm:relu}) clarifies a more promising direction.
If sinks are undesirable for a downstream goal—e.g., they waste representational capacity \citep{Yu2024Unveiling}, confound attention-based analyses \citep{Guo2024ActiveDormantAH}, or create quantization-unfriendly outliers \citep{sun2024massive}—the right lever is to change how ``off'' states are represented, via non-normalized attention, explicit gating, or other mechanisms that can output zero without allocating probability mass.

\begin{figure}[!t]
  \centering
  \includegraphics[width=0.65\columnwidth]{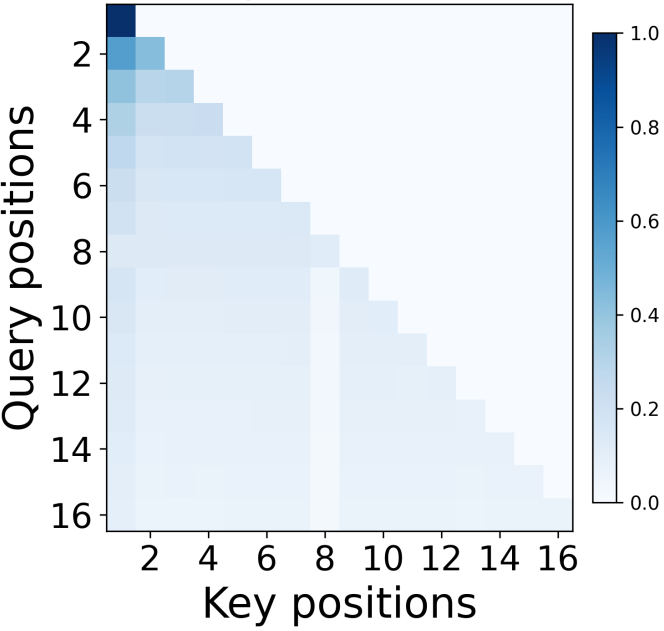}
  \caption{\textbf{A softmax head without a sink in multilayer Transformer.} Attention pattern of head~3, layer~4 in a 4-layer 4-head softmax model that achieves low loss. This head places near-zero attention on \texttt{BOS} at all non-\texttt{BOS} positions, while other heads in the same network exhibit strong sinks (see \cref{fig:appendix-softmax-4h4d} for all heads). This confirms the existential nature of \cref{thm:multilayer}: a sink must exist \emph{somewhere} in the network, but not in every head.}
  \label{fig:no-sink-head}
\end{figure}

More broadly, we hope our results can help guide future work on designing sink-free attention mechanisms that directly support no-op operations.

\section{Limitations}\label{sec:limitations}
  The synthetic trigger-conditional task, while empirically grounded in real sink behavior \citep{barbero2025llmsattendtoken,Guo2024ActiveDormantAH}, represents a specific computational pattern within a broader class of trigger-conditional problems.
  Our analysis likely extends to related tasks such as key-query retrieval where a query must extract a specific previous token (e.g., marked by a feature bit) while ignoring others—resembling the apostrophe head in \cref{fig:apostrophe}.
  We leave the formal characterization of the full class of tasks necessitating sinks to future work.

  For multi-layer models, our necessity result (\cref{thm:multilayer}) guarantees that at least one layer must exhibit sink behavior at some non-trigger position, but does not characterize which specific layer this must be.
  Our experiments extend this to multi-head architectures and confirm that sinks indeed do not form in all heads or layers (\cref{sec:appendix-additional-exp}), consistent with the existential nature of the theorem; understanding exactly where sinks emerge would likely require a dynamical analysis of how optimization selects among valid solutions, which we leave to future work.

  Finally, it would be interesting to investigate whether other special tokens that are stable and always present in the input (e.g., \texttt{<|think|>} in reasoning models) exhibit similar sink behavior, and to investigate the relatively newly discovered phenomenon of ``secondary attention sinks'' \citep{wong2026existencebehaviorsecondaryattention}. We leave this direction for future work as well.

\section*{Acknowledgments}
I thank Yotam Alexander, Amit Elhelo, Daniela Gottesman, Eden Lumbroso and Yoni Slutsky for illuminating discussions. Special thanks to my advisor Nadav Cohen for his guidance and mentorship.  We used AI assistance for writing and code development. This work was supported by the European Research Council (ERC) grant NN4C 101164614, a Google Research Scholar Award, a Google Research Gift, Meta, the Yandex Initiative in Machine Learning, the Israel Science Foundation (ISF) grant 1780/21, the Tel Aviv University Center for AI and Data Science, the Adelis Research Fund for Artificial Intelligence, Len Blavatnik and the Blavatnik Family Foundation, and Amnon and Anat Shashua.

\bibliography{custom}

\appendix

\section{Training Details}\label{sec:appendix-training}

All models are trained using the Adam optimizer ($\beta_1{=}0.9, \beta_2{=}0.95$) with batch size $128$ over the $\ell_2$ loss until the $\ell_\infty$ loss is less than $ 10^{-2}$ for the entire batch. Single-layer models use learning rate $10^{-3}$; multi-layer models use learning rate $10^{-4}$.
  We use input dimension $n=16$ and sample content coordinates i.i.d. from $\mathcal{U}(-1,1)$.

\section{Practical Impact of Attention Sinks}\label{sec:appendix-sink-impact}

The goal of this section is to detail the empirical motivation for our theoretical study.
Attention sinks have been shown to affect several aspects of model performance and deployment.
We briefly survey the evidence here to motivate the practical importance of understanding their origin.

\paragraph{Accuracy and context utilization.}
When probability mass concentrates on a fixed position, attention can be diverted away from other tokens and downstream accuracy can be affected \citep{Yu2024Unveiling}.
\citet{Guo2024ActiveDormantAH} document ``active--dormant'' heads in which dormant sink behavior effectively wastes representational capacity.

\paragraph{Compression and quantization.}
Attention sinks are correlated with outlier activations that complicate model compression.
\citet{sun2024massive} identify massive activations tied to sink tokens, and \citet{lin2024duquant} show that these outliers are a key challenge for quantization.
\paragraph{Streaming and long-context inference.}
Attention sinks complicate streaming and rolling-window KV-cache strategies: \citet{xiao2023efficient} show that evicting sink tokens from the cache causes catastrophic performance degradation, and that explicitly retaining them is necessary for stable generation on sequences far beyond the training length.

\paragraph{Vision and multimodal models.}
Analogous sink effects appear in vision Transformers and multimodal models.
\citet{Kang2025See} show that visual attention sinks allocate high attention weights to irrelevant visual tokens, wasting representational capacity.
\citet{wang2025mirage} demonstrate that attention sinks in multimodal models can be exploited to induce hallucinations, and \citet{Feng2025EDIT} propose architectural modifications to mitigate sink behavior in vision Transformers.

\paragraph{Interpretability.}
Sinks distort attention-based analyses by concentrating probability mass on tokens that carry no content-relevant information, complicating efforts to use attention patterns for model interpretation \citep{Guo2024ActiveDormantAH}.

\section{Additional Experimental Results}\label{sec:appendix-additional-exp}

To further validate our findings at larger scale, we train 4-layer 4-head models with both softmax and ReLU attention. All models use the same training configuration described in Appendix~\ref{sec:appendix-training}.
  \Cref{fig:appendix-softmax-4h4d,fig:appendix-relu-4h4d} show representative attention patterns.
  The softmax variant exhibits strong sink behavior in at least one head per layer in the no-trigger regime, while the ReLU variant maintains near-zero attention on \texttt{BOS} throughout.
  These results provide additional evidence that the necessity of attention sinks in softmax models persists in deeper, wider architectures.

\section{Proof of \cref{thm:main}}\label{proof:main}
  
  We prove \cref{thm:main} by establishing two separate necessity results: one for pre-trigger positions (\cref{thm:pre-trigger-app}) and one for post-trigger positions (\cref{thm:post-trigger-app}).
  Combining these two results directly yields the statement of \cref{thm:main}, which asserts necessity at all non-trigger positions $i \neq j$.

  \begin{theorem}[Pre-Trigger Necessity]\label{thm:pre-trigger-app}
    For any $\varepsilon, \delta \in \mathbb{R}_{>0}, L \in \mathbb{N}_{\geq 4}$, $n \in \mathbb{N}_{\geq 5}$, and a bounded probability density function $\mathcal{P}$, there exists a constant $\eta \in \mathbb{R}_{>0}$ such that the following holds. Consider any single-layer softmax attention model $f$ with loss $\mathcal{L}(f) \leq \eta$ on sequences with length $L$ and dimension $n$ where non-trigger coordinates are drawn from $\mathcal{P}$. Then with probability at least $1 - \delta$ over the choice of $\mathbf{x}$ with trigger position $j$, for all pre-trigger positions $1 < i < j$, we have $\alpha_{i,1} \geq 1 - \varepsilon$.
  \end{theorem}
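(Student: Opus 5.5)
The plan is to argue by contradiction, using only causality and two elementary softmax facts. First, a pre-trigger output $f(\mathbf{x})^{(i)}$ with $i<j$ depends only on the prefix $\mathbf{x}^{(1)},\dots,\mathbf{x}^{(i)}$, which consists of \texttt{BOS} followed by non-trigger tokens. Second, deleting keys from a prefix can only increase every remaining softmax weight (monotonicity). Throughout, write $\mathbf{V}:=\mathbf{W}_O\mathbf{W}_V$ and let $\eta=\mathcal{L}(f)$.

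\textbf{Step 1 (crushing lemma).} Since $L\ge 4$, the following prefixes can all be completed to sequences in the support with the trigger placed later.
\begin{itemize}[leftmargin=*]
\item Position $1$ gives $\|\mathbf{V}e_1\|_2\le\eta$.
\item The prefix $(\texttt{BOS},\mathbf{z})$ gives $\beta(\mathbf{z})\,\|\mathbf{V}\mathbf{z}\|_2\le 2\eta$, where $\beta(\mathbf{z})$ is the self-weight of $\mathbf{z}$ at position $2$.
\item In the prefix $(\texttt{BOS},\mathbf{z}_h,\mathbf{z}_i)$, the weight that position $3$ places on its own key is at most $\beta(\mathbf{z}_i)$ by monotonicity. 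So that term contributes at most $2\eta$, and requiring the output at position $3$ to be $\eta$-close to $\mathbf{0}$ yields $\alpha_{3,2}\|\mathbf{V}\mathbf{z}_h\|_2\le 4\eta$.
\end{itemize}
Now take any full prefix in which position $i$ (holding $\mathbf{z}_i$) attends to position $h$ (holding $\mathbf{z}_h$). Monotonicity gives $\alpha_{i,h}\le\alpha_{3,2}$ for that pair, and hence
\[
\|\mathbf{V}\mathbf{z}_h\|_2\le 4\eta/\alpha_{i,h}.
\]

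\textbf{Step 2 (positive-measure crushed set).} Suppose that with probability greater than $\delta$ some pre-trigger $i$ has $\alpha_{i,1}<1-\varepsilon$. The remaining mass $\varepsilon$ is spread over at most $L$ keys. Pigeonholing over the at most $L^3$ triples $(j,i,h)$ then gives a fixed triple with $2\le h\le i<j$ and an event of probability at least $\delta/L^3$ on which $\alpha_{i,h}\ge\varepsilon/L$. Apply Fubini to the product measure of the i.i.d.\ tokens, fixing all tokens other than $\mathbf{z}_h$ (when $h=i$, use the self-weight version of Step 1 directly). This yields a set $S$ of token values with $\mathcal{P}$-measure at least $c=c(\delta,L)>0$ and
\[
\|\mathbf{V}\mathbf{z}\|_2\le 4L\eta/\varepsilon\quad\text{for all }\mathbf{z}\in S.
\]

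\textbf{Step 3 (transplant into a trigger sequence).} Fix any content coordinate $m\ge 4$. Applying Fubini along lines parallel to $e_m$, some such line meets $S$ in a set of one-dimensional measure bounded below. Because the density is bounded by $M$, that set has diameter at least some $r=r(c,M,n)>0$. This gives $\mathbf{z},\mathbf{z}'\in S$ equal off coordinate $m$ with $|z_m-z'_m|\ge r$.

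Fix any trigger token $\mathbf{t}$ and consider the sequences $(\texttt{BOS},\mathbf{z},\mathbf{t},\dots)$ and $(\texttt{BOS},\mathbf{z}',\mathbf{t},\dots)$ with trigger at position $3$. Their targets at position $3$ differ by $\tfrac12(z_m-z'_m)e_m$. Each prediction equals $\alpha_{3,1}\mathbf{V}e_1+\alpha_{3,2}\mathbf{V}\mathbf{z}+\alpha_{3,3}\mathbf{v}$ with $\mathbf{v}:=\mathbf{V}\mathbf{t}$. Let $P$ denote the projection onto $\mathbf{v}^\perp$. Projecting with $P$ removes the trigger term, so the projected predictions differ by $O(\eta+L\eta/\varepsilon)$. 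Since $n\ge 5$ there are two content coordinates $m,m'$, and one of them satisfies $\|Pe_m\|_2\ge 1/\sqrt2$; we run the argument with that coordinate. The projected targets then differ by at least $r/(2\sqrt2)$. Combined with the loss bound this gives
\[
r/(2\sqrt2)\le 2\eta+O(L\eta/\varepsilon),
\]
which is false once $\eta$ is small enough in terms of $(\varepsilon,\delta,L,n,M)$. This smallness threshold defines the constant $\eta$ of the statement.

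\textbf{Main obstacle.} I expect Step 2 and the first half of Step 3 to be the delicate part: converting ``a bad event of probability $\ge\delta$'' into a single token-value set of measure bounded below uniformly in $f$. The event depends on the other tokens and on $f$, and the argument must then extract a pair differing in exactly one coordinate by a constant. I would handle this entirely by Fubini together with the bounded-density bound, so that every constant depends only on $(\varepsilon,\delta,L,n,\mathcal{P})$. The crushing lemma in Step 1 also needs some care, specifically the monotonicity comparison that removes the query's own value term.
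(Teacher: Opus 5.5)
Your proposal is correct and follows essentially the same route as the paper's proof. The shared steps are: the prefix/monotonicity reductions that force $\|\mathbf{V}\mathbf{z}\|_2=O(\eta/\alpha_{i,h})$; pigeonholing to a fixed $(j,i,h)$; the Fubini-plus-bounded-density extraction of two tokens differing only in one content coordinate; and the transplant into a trigger-at-position-$3$ sequence, followed by projecting out $\mathbf{V}\mathbf{t}$. The differences are cosmetic: you derive an explicit threshold for $\eta$ rather than arguing by contradiction over a sequence of models with $\eta_t\to 0$, and you skip the paper's check that $\mathbf{V}\mathbf{t}\neq\mathbf{0}$, which is indeed unnecessary since the projection is then the identity.
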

  \begin{proof}

    \medskip\noindent\textbf{Step 1: We can assume that $\mathbf{W}_K=\mathbf{I}$ and $\mathbf{W}_O=\mathbf{I}$.}
  Let
  \[
  \mathbf{B}:=\mathbf{W}_Q\mathbf{W}_K^\top,\qquad
  \mathbf{V}:=\mathbf{W}_O\mathbf{W}_V.
  \]
  For any input, the scores and outputs are
  \[
  s_{i,k}
  = \mathbf{x}^{(i)}\mathbf{B}(\mathbf{x}^{(k)})^\top,
  \qquad
  \hat{\mathbf{y}}^{(i)}
  = \sum_{k\le i}\alpha_{i,k}\,\mathbf{V}\,\mathbf{x}^{(k)},
  \]
  with
  \[
  \alpha_{i,k}
  = \frac{\exp(s_{i,k})}{\sum_{\ell\le i}\exp(s_{i,\ell})}.
  \]
  Thus the attention depends on $(\mathbf{W}_Q,\mathbf{W}_K)$ only through
  $\mathbf{B}$, and the output depends on $(\mathbf{W}_O,\mathbf{W}_V)$ only  
  through $\mathbf{V}$.
  Reparameterizing by setting
  \begin{align*}
  \mathbf{W}_K&:=\mathbf{I},\quad\mathbf{W}_Q:=\mathbf{B},\\
  \mathbf{W}_O&:=\mathbf{I},\quad\mathbf{W}_V:=\mathbf{V}
  \end{align*}
  leaves $\alpha_{i,k}$ and $\hat{\mathbf{y}}^{(i)}$ unchanged, hence the
  loss is unchanged.
  Therefore, we will assume without loss of generality that $\mathbf{W}_K=\mathbf{I}$ and
  $\mathbf{W}_O=\mathbf{I}$, write $\mathbf{Q}$ for the query map, and
  $\mathbf{V}$ for the (combined) value map.

\medskip\noindent\textbf{Step 2: Setup and pigeonhole principle.}
Fix $\varepsilon_0,\delta_0 \in \mathbb{R}_{>0}$ and suppose by contradiction that there exists a sequence of one-layer softmax models $\{f_t\}_{t=1}^\infty$ with $\eta_t:=\mathcal{L}(f_t)\to 0$ such that, for each $t$, with probability at least $\delta_0$ over $(\mathbf{x},j)\sim\mathcal{P}$ there is a pre-trigger position $i<j$ violating the sink condition:
\begin{equation}\label{eq:nosink}
\alpha_{i,1} \le 1-\varepsilon_0.
\end{equation}
Since $\sum_{k\le i}\alpha_{i,k}=1$, \eqref{eq:nosink} implies that the total mass on non-\texttt{BOS} keys is at least $\varepsilon_0$.
  There are only finitely many position triples $(i,h,j)$ with $2\le h\le i<j\le L$.
  By a pigeonhole principle, there exist infinitely many times $t_{a_1},t_{a_2},\ldots$ and fixed indices $2\le i^\star < j^\star\le L$ and $2\le h^\star\le i^\star$, and a constant $\gamma \in \mathbb{R}_{>0}$ (e.g., $\gamma=\varepsilon_0/L^2$), such that
\begin{equation}\label{eq:fixed-triple}
\mathbb{P}\!\Big(\alpha_{i^\star,1}\le 1-\varepsilon_0 \text{ and } \alpha_{i^\star,h^\star}\ge \gamma\Big) \ge \delta
\end{equation}
for some $\delta \in \mathbb{R}_{>0}$ independent of $t$.
  By relabeling this subsequence, we assume without loss of generality that \eqref{eq:fixed-triple} holds for all $t$.

\medskip\noindent\textbf{Step 3: Constructing tokens via Lemma~\ref{lem:axis-separated}.}

Since the event in \eqref{eq:fixed-triple} has positive probability at least $\delta$, by Lemma~\ref{lem:axis-separated} (applied to content coordinates) there exists $\varepsilon' \in \mathbb{R}_{>0}$ (independent of $t$) such that for every content coordinate $m\in\{4,\dots,n\}$ there exist tokens $x^{(m)},y^{(m)}$ with the following properties: (i) $x^{(m)}_k = y^{(m)}_k$ for all $k\neq m$, and $\bigl|x^{(m)}_m - y^{(m)}_m\bigr| \ge \varepsilon'$; and (ii) there exist sequences with either $x^{(m)}$ or $y^{(m)}$ at position $h^\star$ and with trigger position $j$ satisfying $i^\star<j$, such that
\begin{align}\label{eq:alpha-lb-interval}
\alpha_{i^\star,h^\star} \ge \gamma.
\end{align}

\medskip\noindent\textbf{Step 4: Positive weight implies small values.}
By Lemma~\ref{lem:pairwise-O-eta} (applied with the pair $(h^\star,i^\star)$ in the case where $h^\star \neq i^\star$) and Lemma~\ref{lem:self-O-eta} (applied with $h^\star$ whenever $h^\star = i^\star$), for every choice of token at position $h^\star$ we have
\[
\big\|\alpha_{i^\star,h^\star}\mathbf{V}\mathbf{x}^{(h^\star)}\big\|_2 \le 4\eta_t.
\]
Combining with \eqref{eq:alpha-lb-interval} yields that for any content coordinate $m$ and any $\mathbf{z}\in\{x^{(m)},y^{(m)}\}$,
\begin{equation}\label{eq:V-small}
\big\|\mathbf{V}\mathbf{z}\big\|_2 \le \tfrac{4}{\gamma}\eta_t.
\end{equation}
That is, the lower bound on $\alpha_{i^\star,h^\star}$ directly forces the value projections to be small for all tokens constructed in Step~2.

\medskip\noindent\textbf{Step 5: Transplanting to $j=3$ and deriving a contradiction.}
Fix $t$ and abbreviate $\eta:=\eta_t$.
  Pick a content coordinate $m\in\{4,\ldots,n\}$ and let $\mathbf{x}_t:=x^{(m)}$ and $\mathbf{y}_t:=y^{(m)}$ be the two tokens from Step~2 satisfying $|\mathbf{x}_{t}^{(m)}-\mathbf{y}_{t}^{(m)}|\ge\varepsilon'$.
  Instantiate two sequences by setting the trigger at $j=3$, taking $\mathbf{x}^{(2)}\in\{\mathbf{x}_t,\mathbf{y}_t\}$, and fixing the trigger token $\mathbf{x}^{(3)}$ to any arbitrary value $\mathbf{t}$ such that the sequence is in the support of $\mathcal{D}$.
  At position $i=3$ the target is
\begin{align}
\mathbf{y}^{(3)} &= \frac{1}{2}(\mathbf{x}^{(2)} + \mathbf{t}).
\end{align}

For any $\mathbf{z}\in\{\mathbf{x}_t,\mathbf{y}_t\}$, let $\beta_t(\mathbf{z})$ be the attention weight $\alpha_{3,3}$ computed on the sequence where $\mathbf{x}^{(2)}=\mathbf{z}$ and $\mathbf{x}^{(3)}=\mathbf{t}$. Define the fixed value vector
\begin{align}
\mathbf{v}_t \;&:=\; \mathbf{V}_t\,\mathbf{t}.
\end{align}
By Lemma~\ref{lem:bos-small} and \eqref{eq:V-small}, at position $3$ we can decompose
\begin{align}
\hat{\mathbf{y}}^{(3)}(\mathbf{z})
&= \underbrace{\alpha_{3,1}\,\mathbf{V}e_1 + \alpha_{3,2}\,\mathbf{V}\mathbf{z}}_{=:~\mathbf{r}_t(\mathbf{z})}
+ \beta_t(\mathbf{z})\,\mathbf{v}_t, \label{eq:haty3-decomp}\\
\|\mathbf{r}_t(\mathbf{z})\|_2 &\le C_0\,\eta,
\end{align}
with $C_0:=1+\tfrac{4}{\gamma}$ independent of $t$.
  Consider coordinate~$3$ (the non-trigger non-BOS indicator).
  Since $(\mathbf{y}^{(3)})_3 = \frac{1}{2}((\mathbf{x}^{(2)})_3 + (\mathbf{t})_3) = \frac{1}{2}(1 + 0) = 0.5$ and $0<\beta_t(\mathbf{z})\le 1$, from \eqref{eq:haty3-decomp} and the uniform loss bound we obtain
\begin{align}
&\big|\beta_t(\mathbf{z})\,(\mathbf{v}_t)_3 - 0.5\big| \notag\\
&\quad\le \big|\hat{\mathbf{y}}^{(3)}_3(\mathbf{z}) - 0.5\big|
+ \big|(\mathbf{r}_t(\mathbf{z}))_3\big| \notag\\
&\quad\le \eta + C_0\eta \notag\\
&\quad= C_1\eta,
\end{align}
where $C_1:=1+C_0$.
  Hence, for all sufficiently large $t$,
\begin{align}
(\mathbf{v}_t)_3
\;&\ge\; \frac{0.5-C_1\eta}{\beta_t(\mathbf{z})}
\;\ge\; 0.5-C_1\eta
\;>\; 0, \label{eq:vn-positive}
\end{align}
so $\mathbf{v}_t\neq \mathbf{0}$.

Let $P_t$ denote the orthogonal projection onto $\mathbf{v}_t^\perp$.
  Since $P_t$ is an orthogonal projection onto an $(n-1)$-dimensional subspace, there must be at least one coordinate $m_0 \in \{4,5\}$ such that $\|P_t e_{m_0}\|_2 \geq 1/\sqrt{2}$; fix $m$ to be that coordinate.
Now, applying $P_t$ to \eqref{eq:haty3-decomp} kills the $\mathbf{v}_t$ component:
\begin{align}
P_t\hat{\mathbf{y}}^{(3)}(\mathbf{z}) &= P_t\mathbf{r}_t(\mathbf{z}), \\
\|P_t\hat{\mathbf{y}}^{(3)}(\mathbf{z})\|_2
&\le \|\mathbf{r}_t(\mathbf{z})\|_2
\le C_0\eta.
\end{align}
Therefore, for the two choices $\mathbf{z}=\mathbf{x}_t,\mathbf{y}_t$,
\begin{align}
&\big\|P_t\hat{\mathbf{y}}^{(3)}(\mathbf{x}_t) - P_t\hat{\mathbf{y}}^{(3)}(\mathbf{y}_t)\big\|_2 \notag\\
&\quad\le\; \|P_t\mathbf{r}_t(\mathbf{x}_t)\|_2 + \|P_t\mathbf{r}_t(\mathbf{y}_t)\|_2 \notag\\
&\quad\le\; 2C_0\eta. \label{eq:proj-pred-diff}
\end{align}
On the other hand, we have $\mathbf{y}^{(3)}(\mathbf{z}) = \frac{1}{2}(\mathbf{z}+\mathbf{t})$, so $P_t\mathbf{y}^{(3)}(\mathbf{z}) = \frac{1}{2}P_t\mathbf{z} + \frac{1}{2}P_t\mathbf{t}$. Since the $\mathbf{t}$ term is constant in $\mathbf{z}$, it cancels in the difference:
\begin{align}
&\big\|P_t\mathbf{y}^{(3)}(\mathbf{x}_t) - P_t\mathbf{y}^{(3)}(\mathbf{y}_t)\big\|_2 \notag\\
&\quad= \frac{1}{2}\|P_t(\mathbf{x}_t-\mathbf{y}_t)\|_2 \notag\\
&\quad= \frac{1}{2}\|P_t((\mathbf{x}_{t,m}-\mathbf{y}_{t,m})e_m)\|_2 \notag\\
&\quad = \frac{1}{2}|\mathbf{x}_{t,m}-\mathbf{y}_{t,m}|\,\|P_t e_m\|_2 \notag\\
&\quad\ge \frac{1}{2}\varepsilon'\,\|P_t e_m\|_2 \notag\\
&\quad\ge \frac{1}{2\sqrt{2}}\varepsilon'.\label{eq:proj-target-diff}
\end{align}
where the third equality uses the fact that $\mathbf{x}_t$ and $\mathbf{y}_t$ differ only on coordinate $m$.

Finally, by the triangle inequality and the uniform loss bound,
\begin{align}
&\big\|P_t\mathbf{y}^{(3)}(\mathbf{x}_t) - P_t\mathbf{y}^{(3)}(\mathbf{y}_t)\big\|_2 \notag\\
&\quad\le \big\|P_t\hat{\mathbf{y}}^{(3)}(\mathbf{x}_t) - P_t\hat{\mathbf{y}}^{(3)}(\mathbf{y}_t)\big\|_2 + 2\eta \notag\\
&\quad\le (2C_0+2)\eta,
\end{align}
which contradicts \eqref{eq:proj-target-diff} for all sufficiently small $\eta$, because $\varepsilon'\|P_t e_m\|_2>0$ is independent of $t$.

    \end{proof}

  \begin{theorem}[Post-Trigger Necessity]\label{thm:post-trigger-app}
    For any $\varepsilon, \delta \in \mathbb{R}_{>0}, L \in \mathbb{N}_{\geq 4}$, $n \in \mathbb{N}_{\geq 5}$, and a bounded probability density function $\mathcal{P}$, there exists a constant $\eta \in \mathbb{R}_{>0}$ such that the following holds. Consider any single-layer softmax attention model $f$ with loss $\mathcal{L}(f) \leq \eta$ on sequences with length $L$ and dimension $n$ where non-trigger coordinates are drawn from $\mathcal{P}$. Then with probability at least $1 - \delta$ over the choice of $\mathbf{x}$ with trigger position $j$, for all post-trigger positions $j < i \leq L$, we have $\alpha_{i,1} \geq 1 - \varepsilon$.
  \end{theorem}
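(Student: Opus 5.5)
I will mirror the proof of \cref{thm:pre-trigger-app}, changing only the step where positive attention on a non-\texttt{BOS} key is converted into a small value vector. Step~1 (reduction to $\mathbf{W}_K=\mathbf{W}_O=\mathbf{I}$, with $\mathbf{B}$ the bilinear score map and $\mathbf{V}$ the combined value map) carries over verbatim.

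For Step~2, I argue by contradiction. Suppose there are models $f_t$ with $\eta_t\to0$, each violating $\alpha_{i,1}\ge 1-\varepsilon_0$ at some post-trigger $i>j$ with probability at least $\delta_0$. Pigeonholing over the finitely many triples $(i,h,j)$ with $j<i\le L$ and $2\le h\le i$ yields fixed $(i^\star,h^\star,j^\star)$ and $\gamma=\varepsilon_0/L^2$. On a subsequence, the event $\{\alpha_{i^\star,1}\le1-\varepsilon_0,\ \alpha_{i^\star,h^\star}\ge\gamma\}$ then has probability bounded below uniformly in $t$.

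There are two cases.

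\emph{Case (a): $h^\star\ne j^\star$}, so $h^\star$ is a non-trigger content token. I apply Lemma~\ref{lem:axis-separated} to the content coordinates of position $h^\star$, obtaining for each $m\ge4$ a pair of tokens differing by at least $\varepsilon'$ only in coordinate $m$, each realizable in a sequence on the event. The key claim is then $\|\alpha_{i^\star,h^\star}\mathbf{V}\mathbf{x}^{(h^\star)}\|_2=O(\eta_t)$. I would prove it as in Lemmas~\ref{lem:pairwise-O-eta} and \ref{lem:self-O-eta}, using that the target at $i^\star$ is $\mathbf{0}$. The output $\hat{\mathbf{y}}^{(i^\star)}$ is a convex combination of value vectors. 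Since sequences with the same tokens at positions $1$, $h^\star$, $i^\star$ are allowed to vary elsewhere, I want to isolate the $h^\star$ term. For this I pass to shorter legal sequences in which the trigger is moved before $h^\star$ or the intermediate tokens are deleted. Monotonicity says removing keys only increases the remaining weights, so $\alpha_{i^\star,h^\star}$ stays at least $\gamma$ in the reduced sequence. The caveat is that the reduced sequence must still contain a trigger before $i$ (post-trigger) or be pre-trigger. Either is fine, because every non-trigger position has target $\mathbf{0}$, and the earlier lemmas are stated for arbitrary non-trigger positions. This gives $\|\mathbf{V}\mathbf{z}\|_2\le 4\eta_t/\gamma$ for the constructed tokens. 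Step~5, transplanting into $(\texttt{BOS},\mathbf{z},\mathbf{t})$ with trigger at $3$ and projecting onto $\mathbf{V}\mathbf{t}^\perp$, is then copied verbatim, since it never used where $i^\star$ sat relative to the trigger.

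\emph{Case (b): $h^\star=j^\star$}, i.e.\ the stray mass sits on the trigger token itself. This is the main obstacle, because crushing $\mathbf{V}$ on trigger tokens does not by itself contradict the trigger output. The trigger output could still come from $\alpha_{j,j}\mathbf{V}\mathbf{t}$ only if $\mathbf{V}\mathbf{t}$ is nonzero. Step~5 shows $(\mathbf{V}\mathbf{t})_3\ge 0.5-C_1\eta$ for every admissible trigger token $\mathbf{t}$; this holds for each fixed $\mathbf{t}$ uniformly, since the loss is worst-case. Meanwhile, the zero-output requirement at $i^\star$ gives $\gamma\|\mathbf{V}\mathbf{t}\|_2=O(\eta)$ on a positive-measure set of $\mathbf{t}$, which is nonempty. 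These two bounds contradict each other for small $\eta$.

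The delicate point I expect to spend the most effort on is justifying $\|\alpha_{i^\star,j^\star}\mathbf{V}\mathbf{t}\|=O(\eta)$ in this case, where the other tokens cannot simply be deleted without losing the trigger. I would try the shortest sequence $(\texttt{BOS},\mathbf{t},\mathbf{x}^{(i^\star)})$ with trigger at $2$ and $i=3$. By monotonicity, $\alpha_{3,2}\ge\gamma$ there. Its output is $\alpha_{3,1}\mathbf{V}e_1+\alpha_{3,2}\mathbf{V}\mathbf{t}+\alpha_{3,3}\mathbf{V}\mathbf{x}^{(3)}$, and the first and last terms are controlled by Lemma~\ref{lem:bos-small} and Lemma~\ref{lem:self-O-eta}. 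The remaining term must therefore be $O(\eta)$. Taking $\mathbf{t}$ from the positive-measure set then finishes the contradiction.
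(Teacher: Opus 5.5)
Your argument is correct in outline and takes a different route from the paper's. The paper first proves, uniformly over the support, that $\alpha_{i,j}\le 4\eta/(1-2\eta)$ for every $i>j$. It does this via the prefix $(\texttt{BOS},\mathbf{t},\mathbf{z})$, Lemmas~\ref{lem:bos-small}, \ref{lem:self-O-eta}, \ref{lem:trigger-large} and monotonicity; this is exactly your case~(b) argument. It then does not re-run the crushing/transplant argument. Instead it reduces to Theorem~\ref{thm:pre-trigger-app}: a $\mathrm{Shift}$ map moves the trigger to position $L$, which deletes the trigger key. By Lemma~\ref{lem:softmax-monotone}, a post-trigger violation at $i$ then becomes a pre-trigger violation at $i-1$ with threshold $1-\varepsilon_0/2$. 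A density-preserving bijection between trigger-at-$j^*$ and trigger-at-$L$ sequences carries the positive probability over.

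Your case~(a) replaces this reduction by repeating Steps~3--5 directly at post-trigger positions. That is legitimate because Lemmas~\ref{lem:pairwise-O-eta} and \ref{lem:self-O-eta} already hold at arbitrary non-trigger positions: their reductions discard all other keys, trigger included. So the ``move the trigger or delete intermediate tokens'' maneuver you describe is unnecessary. The paper's route is more modular: it uses the pre-trigger theorem as a black box and makes explicit that, once the trigger is ignored, post-trigger positions look pre-trigger. The cost is the shift/measure-preservation bookkeeping and a second pigeonhole. Your route avoids that machinery. It also shows the pre-trigger proof covers every non-trigger position once the ``mass on the trigger'' case is excluded.

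One justification must be replaced. In case~(b) you cannot cite Step~5 for $(\mathbf{V}\mathbf{t})_3\ge 0.5-C_1\eta$. That bound needs crushed non-trigger tokens $\mathbf{z}$ with $\|\mathbf{V}\mathbf{z}\|_2=O(\eta)$ to make $\mathbf{r}_t(\mathbf{z})$ small, and those exist only in case~(a). Use Lemma~\ref{lem:trigger-large} instead. The length-2 sequence $(\texttt{BOS},\mathbf{t})$ with trigger at position $2$ gives $\|\mathbf{V}\mathbf{t}\|_2\ge 1-2\eta$ for every trigger token. Combined with $\gamma\|\mathbf{V}\mathbf{t}\|_2\le 4\eta$ from your prefix $(\texttt{BOS},\mathbf{t},\mathbf{x}^{(i^\star)})$, this is contradictory for small $\eta$, and a single sequence in the event suffices.
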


  \begin{proof}
    \medskip\noindent\textbf{Step 1: The trigger receives arbitrarily small attention post-trigger.}
Fix any trigger token $\mathbf{t}$ and any non-trigger token $\mathbf{z}$, and consider the length-$3$ prefix
$(\texttt{BOS},\mathbf{t},\mathbf{z})$ (so the trigger position is $j=2$ and position $3$ is post-trigger). Let
$\widetilde{\alpha}_{3,1},\widetilde{\alpha}_{3,2},\widetilde{\alpha}_{3,3}$ be the attention weights at position $3$.

We first bound the self term $\widetilde{\alpha}_{3,3}\mathbf{V}\mathbf{z}$ using Lemma~\ref{lem:self-O-eta}.
Embed the pair $(\texttt{BOS},\mathbf{z})$ as the first two tokens of any valid sequence from $\mathcal{D}$ whose trigger position
satisfies $j\ge 3$ (so position $2$ is pre-trigger and non-trigger). Applying Lemma~\ref{lem:self-O-eta} at $i=2$ gives
$\|\alpha_{2,2}\mathbf{V}\mathbf{z}\|_2\le 2\eta$ for that sequence, and by Lemma~\ref{lem:softmax-monotone} (adding the extra key
$\mathbf{t}$ can only decrease the probability assigned to $\mathbf{z}$) we have $\widetilde{\alpha}_{3,3}\le \alpha_{2,2}$, hence
\(
\|\widetilde{\alpha}_{3,3}\mathbf{V}\mathbf{z}\|_2 \le 2\eta.
\)
Also, Lemma~\ref{lem:bos-small} gives $\|\widetilde{\alpha}_{3,1}\mathbf{V}e_1\|_2\le \eta$.
Since the target at position $3$ is $\mathbf{0}$, we have $\|\hat{\mathbf{y}}^{(3)}\|_2\le \eta$, and therefore
\begin{align*}
\|\widetilde{\alpha}_{3,2}\mathbf{V}\mathbf{t}\|_2
&\le \|\hat{\mathbf{y}}^{(3)}\|_2+\|\widetilde{\alpha}_{3,1}\mathbf{V}e_1\|_2\\
&\quad+\|\widetilde{\alpha}_{3,3}\mathbf{V}\mathbf{z}\|_2 \le 4\eta.
\end{align*}
Finally, Lemma~\ref{lem:trigger-large} (applied to any valid sequence with trigger at position $2$) gives
$\|\mathbf{V}\mathbf{t}\|_2\ge 1-2\eta$, so
\begin{equation}\label{eq:alpha32-small-post-final}
\widetilde{\alpha}_{3,2} \le \frac{4\eta}{1-2\eta}.
\end{equation}

Now fix any valid sequence $\mathbf{x}$ with trigger position $j$ and any $i>j$. By Lemma~\ref{lem:reductions}(2),
$\alpha_{i,j}\le \widehat{\alpha}_{3,2}$ where $\widehat{\alpha}_{3,2}$ is the attention weight on the second token in the prefix
$(\texttt{BOS},\mathbf{x}^{(j)},\mathbf{x}^{(i)})$, and applying \eqref{eq:alpha32-small-post-final} to that prefix yields
\begin{equation}\label{eq:alphaij-small-post-final}
\alpha_{i,j} \le \frac{4\eta}{1-2\eta}\qquad\text{for all }i>j.
\end{equation}
    
\medskip\noindent\textbf{Step 2 (contradiction via shifting the trigger).}
Fix $\varepsilon_0,\delta_0 \in \mathbb{R}_{>0}$ and suppose, for contradiction, that the theorem is false. Then there exists a sequence of
one-layer softmax models $\{f_t\}_{t\ge 1}$ with $\eta_t:=\mathcal{L}(f_t)\to 0$ such that for every $t$,
\begin{equation}\label{eq:post-fail}
\mathbb{P}_{(\mathbf{x},j)\sim\mathcal{D}}\Big(\exists\, i>j:\ \alpha^{(t)}_{i,1}(\mathbf{x})\le 1-\varepsilon_0\Big)\ \ge\ \delta_0.
\end{equation}

By Step~1 (i.e., \eqref{eq:alphaij-small-post-final}), for all $\mathbf{x}$ in $\mathrm{support}(\mathcal{D})$ and all $i>j$,
\[
\alpha^{(t)}_{i,j}(\mathbf{x}) \le \frac{4\eta_t}{1-2\eta_t}.
\]
Fix $t$ large enough so that $\frac{4\eta_t}{1-2\eta_t}\le \varepsilon_0/2$. 

Let $E_t$ be the event in \eqref{eq:post-fail}. For each $(\mathbf{x},j)\in E_t$ there exists $i(\mathbf{x})>j$ with $\alpha^{(t)}_{i(\mathbf{x}),1}(\mathbf{x})\le 1-\varepsilon_0$.
For that $i(\mathbf{x})$ we also have $\alpha^{(t)}_{i(\mathbf{x}),j}(\mathbf{x})\le \varepsilon_0/2$, hence
\begin{align}\label{eq:mass-nontrigger}
&\sum_{k\le i(\mathbf{x}),\ k\notin\{1,j\}}\alpha^{(t)}_{i(\mathbf{x}),k}(\mathbf{x}) \notag\\
&\quad= 1-\alpha^{(t)}_{i(\mathbf{x}),1}(\mathbf{x})-\alpha^{(t)}_{i(\mathbf{x}),j}(\mathbf{x}) \notag\\
&\quad\ge \varepsilon_0/2.
\end{align}

Now define the shift map $\mathrm{Shift}$ that moves the trigger token to the end:
$\mathbf{x}' = \mathrm{Shift}(\mathbf{x}, j)$, where $\mathbf{x}'^{(k)} = \mathbf{x}^{(k)}$ for $1 \le k < j$ (positions before the trigger are unchanged), $\mathbf{x}'^{(k)} = \mathbf{x}^{(k+1)}$ for $j \le k \le L-1$ (positions after the trigger shift left by one), and $\mathbf{x}'^{(L)} = \mathbf{x}^{(j)}$ (trigger moves to the end).
Then $\mathbf{x}'\in\mathrm{support}(\mathcal{D})$ with trigger position $L$, and moreover, by the definition of the task (\cref{sec:task-def}), we have that the probability density of $\mathbf{x}'$ is the same as that of $\mathbf{x}$:
\begin{equation}\label{eq:shift-probability-density}
\mathcal{P}(\mathbf{x}) = \mathcal{P}(\mathbf{x}').
\end{equation}

Fix $(\mathbf{x},j)\in E_t$.
Applying Lemma~\ref{lem:softmax-monotone} we get that removing the key $j$
can only \emph{increase} the attention weight of each remaining key (at position $i(\mathbf{x})$ in $\mathbf{x}$, the ``candidate'' key set is $\{1,\dots,i(\mathbf{x})\}$; at position $i(\mathbf{x})-1$ in $\mathbf{x}'$, the ``candidate'' key set
is $\{1,\dots,i(\mathbf{x})\}\setminus\{j\}$, the same set with the trigger key removed.)
Therefore,
\begin{align*}
&\sum_{r\le i(\mathbf{x})-1,\ r\ne 1}\alpha'^{(t)}_{i(\mathbf{x})-1,r}(\mathbf{x}') \\
&\quad\ge \sum_{k\le i(\mathbf{x}),\ k\notin\{1,j\}}\alpha^{(t)}_{i(\mathbf{x}),k}(\mathbf{x}) \\
&\quad\ge\ \varepsilon_0/2,
\end{align*}
where the last inequality is \eqref{eq:mass-nontrigger}. Equivalently,
\begin{equation}\label{eq:shift-violates-pre}
\alpha'^{(t)}_{i(\mathbf{x})-1,1}(\mathbf{x}')\ \le\ 1-\varepsilon_0/2.
\end{equation}

Since this holds for \emph{every} $(\mathbf{x},j)\in E_t$, by the Pigeonhole Principle, there exist fixed indices $j^* \in \{1,\dots,L\}$ and $r^* < L$ and a constant $c_1 \in \mathbb{R}_{>0}$ such that for infinitely many $t$,
\begin{equation}\label{eq:shifted-failure-prob}
\begin{split}
&\mathbb{P}_{(\mathbf{x},j)\sim\mathcal{D}}\Big(\alpha'^{(t)}_{r^*,1}(\mathbf{x}')\le 1-\varepsilon_0/2 \ \text{and}\ j=j^*\Big)\\
&\qquad\ge\ c_1 \delta_0,
\end{split}
\end{equation}
where $\mathbf{x}'=\mathrm{Shift}(\mathbf{x},j)$.

Finally, consider the bijection $\mathbf{x} \mapsto \mathrm{Shift}(\mathbf{x}, j^*)$ from the set of sequences with trigger at $j^*$ to the set of sequences with trigger at $L$. By \cref{eq:shift-probability-density}, this map preserves probability density. Thus, the event in \eqref{eq:shifted-failure-prob} has the exact same probability as the corresponding event for sequences with trigger at $L$:
\begin{align*}
&\mathbb{P}_{(\mathbf{z},j)\sim\mathcal{D}}\Big(\alpha^{(t)}_{r^*,1}(\mathbf{z})\le 1-\varepsilon_0/2 \ \text{and}\ j=L\Big)\\
&\quad\ge\ c_1 \delta_0.
\end{align*}
Conditioning on $j=L$, this implies that for infinitely many $t$,
\[
\mathbb{P}\Big(\alpha^{(t)}_{r^*,1}(\mathbf{z})\le 1-\varepsilon_0/2 \ \Big|\ j=L\Big) \ge \frac{c_1 \delta_0}{\mathbb{P}(j=L)}.
\]
Since $r^* < L$, this contradicts Theorem~\ref{thm:pre-trigger-app}, as needed.

\end{proof}

\section{Proof of \cref{thm:multilayer}}\label{proof:multilayer}

\paragraph{Step 1: Setup and contradiction assumption.}
  Fix $\varepsilon_0,\delta_0 \in \mathbb{R}_{>0}$.
  Suppose for contradiction that there exists a sequence of $D$-layer softmax models $\{f_t\}_{t=1}^\infty$ with
  \[
  \eta_t \;:=\; \mathcal{L}(f_t)\;\longrightarrow\;0
  \]
  such that, for every $t$,
  \begin{equation}\label{eq:multilayer-no-sink-event}
  \begin{split}
  &\mathbb{P}\!\Big(\forall d\in\{1,\ldots,D\},\ \forall\,1<i<j:\ \\
  &\qquad\alpha^{(d)}_{i,1}\le 1-\varepsilon_0 \Big | j \geq 3\Big)\;\ge\;\delta_0.
  \end{split}
  \end{equation}
  Let $E_t$ denote the event inside the probability in \eqref{eq:multilayer-no-sink-event} intersected with the event $j \geq 3$.
  For each $t$, let $\mathbf{V}_t$ be the combined value map from Lemma~\ref{lem:multilayer-unroll}, and write $\beta^{(t)}_{i,k}(\cdot)$ for the corresponding coefficients.

\paragraph{Step 2: No sink implies small value projections.}
  On the event $E_t$, position $2$ is pre-trigger (since $j\ge 3$) and for every layer $d$,
  \[
  \alpha^{(d)}_{2,2} \;=\; 1-\alpha^{(d)}_{2,1} \;\ge\; \varepsilon_0.
  \]
  Therefore, by Lemma~\ref{lem:beta22-product} conditioned on $E_t$ we have that
  \begin{equation}\label{eq:beta22-lowerbound}
  \beta^{(t)}_{2,2}(\mathbf{x}) \;\ge\; \varepsilon_0^{D}
  \end{equation}
  Moreover, Lemma~\ref{lem:beta22-O-eta} applied to $f_t$ yields
  \[
  \big\|\beta^{(t)}_{2,2}(\mathbf{x})\,\mathbf{V}_t\mathbf{x}^{(2)}\big\|_2 \;\le\; 2\eta_t.
  \]
  Combining with \eqref{eq:beta22-lowerbound} gives
  \begin{equation}\label{eq:Vx2-small-multi}
  \|\mathbf{V}_t\mathbf{x}^{(2)}\|_2 \;\le\; \frac{2}{\varepsilon_0^{D}}\,\eta_t
  \qquad\text{on }E_t.
  \end{equation}
  Define the measurable set
  \[
  S_t
  \;:=\;
  \Big\{\mathbf{z}\in\mathbb{R}^n:\ \|\mathbf{V}_t\mathbf{z}\|_2 \le \tfrac{2}{\varepsilon_0^{D}}\eta_t\Big\}.
  \]
  Since $E_t\subseteq\{\mathbf{x}^{(2)}\in S_t\}$ by \eqref{eq:Vx2-small-multi}, \eqref{eq:multilayer-no-sink-event} implies
  \begin{equation}\label{eq:St-positive-mass}
  \mathbb{P}\big(\mathbf{x}^{(2)}\in S_t \big | j \geq 3 \big)\;\ge\;\delta_0.
  \end{equation}
  
  By Lemma~\ref{lem:axis-separated} (applied to content coordinates) and \eqref{eq:St-positive-mass}, there exists $\varepsilon' \in \mathbb{R}_{>0}$ (independent of $t$) such that for every content coordinate $m\in\{4,\ldots,n\}$ there exist tokens
  $\mathbf{x}_t^{(m)},\mathbf{y}_t^{(m)}\in S_t$ satisfying
  \begin{equation}\label{eq:xtyt-separated}
  \begin{split}
  &\mathbf{x}^{(m)}_{t,k}=\mathbf{y}^{(m)}_{t,k}\ \text{for all }k\neq m, \\
  &\quad
  \big|\mathbf{x}^{(m)}_{t,m}-\mathbf{y}^{(m)}_{t,m}\big|\;\ge\;\varepsilon'.
  \end{split}
  \end{equation}

\paragraph{Step 3: Transplanting to $j=3$ and deriving a contradiction.}
  Fix $t$ and abbreviate $\eta:=\eta_t$.
  Pick a content coordinate $m\in\{4,\ldots,n\}$ and let $\mathbf{x}_t:=\mathbf{x}_t^{(m)}$ and $\mathbf{y}_t:=\mathbf{y}_t^{(m)}$ be the two tokens from Step~2 satisfying $|\mathbf{x}_{t,m}-\mathbf{y}_{t,m}|\ge\varepsilon'$.
  Instantiate two sequences by setting the trigger at $j=3$, taking $\mathbf{x}^{(2)}\in\{\mathbf{x}_t,\mathbf{y}_t\}$, and fixing the trigger token $\mathbf{x}^{(3)}$ to an arbitrary value $\mathbf{t}$ such that the sequence is in the support of $\mathcal{D}$.
  At position $i=3$ the target is
  \begin{equation}\label{eq:target-j3}
  \mathbf{y}^{(3)} \;=\; \frac{1}{2}(\mathbf{x}^{(2)} + \mathbf{t}).
  \end{equation}
  
  For any $\mathbf{z}\in\{\mathbf{x}_t,\mathbf{y}_t\}$, let $\beta_t(\mathbf{z})$ be the coefficient $\beta^{(t)}_{3,3}(\mathbf{z})$ computed on the sequence where $\mathbf{x}^{(2)}=\mathbf{z}$ and $\mathbf{x}^{(3)}=\mathbf{t}$. Define the fixed value vector
  \begin{align}
  \mathbf{v}_t \;&:=\; \mathbf{V}_t\,\mathbf{t}.
  \end{align}
  By Lemma~\ref{lem:multilayer-unroll}, for each choice $\mathbf{x}^{(2)}=\mathbf{z}$ we can decompose
  \begin{align}
  \hat{\mathbf{y}}^{(3)}(\mathbf{z})
  &=
  \underbrace{\beta^{(t)}_{3,1}(\mathbf{z})\,\mathbf{V}_t e_1
  +
  \beta^{(t)}_{3,2}(\mathbf{z})\,\mathbf{V}_t \mathbf{z}}_{=:~\mathbf{r}_t(\mathbf{z})}
  +
  \beta_t(\mathbf{z})\,\mathbf{v}_t.
  \label{eq:haty3-decomp-multi}
  \end{align}
  Since $\beta^{(t)}_{3,1}(\mathbf{z}),\beta^{(t)}_{3,2}(\mathbf{z})\le 1$, Lemma~\ref{lem:bos-small-multilayer} gives $\|\mathbf{V}_t e_1\|_2\le \eta$, and $\mathbf{z}\in S_t$ implies $\|\mathbf{V}_t\mathbf{z}\|_2\le \tfrac{2}{\varepsilon_0^D}\eta$.
  Therefore
  \begin{equation}\label{eq:rt-small-multi}
  \|\mathbf{r}_t(\mathbf{z})\|_2 \;\le\; C_0\,\eta,
  \qquad
  C_0 \;:=\; 1+\tfrac{2}{\varepsilon_0^D}.
  \end{equation}
  
  Consider coordinate~$3$ (the non-trigger non-BOS indicator).
  For the $j=3$ construction, we have $(\mathbf{y}^{(3)})_3=0.5$.
  Using \eqref{eq:haty3-decomp-multi} and the uniform loss bound,
  \begin{align*}
  &\big|\beta_t(\mathbf{z})\,(\mathbf{v}_t)_3 - 0.5\big| \\
  &\quad\le
  \big|\hat{\mathbf{y}}^{(3)}_3(\mathbf{z})-0.5\big|
  +
  \big|(\mathbf{r}_t(\mathbf{z}))_3\big| \\
  &\quad\le \eta+C_0\eta \\
  &\quad= C_1\eta,
  \end{align*}
  where $C_1:=1+C_0$.
  Hence $(\mathbf{v}_t)_3\ge 0.5-C_1\eta>0$ for all sufficiently large $t$, so $\mathbf{v}_t\neq \mathbf{0}$.
  
  Let $P_t$ denote the orthogonal projection onto $\mathbf{v}_t^\perp$.
  Since $\dim(\mathbf{v}_t^\perp)=n-1$, there exists at least one coordinate $m_0 \in\{4,5\}$ such that
  \begin{equation}\label{eq:proj-em-lb-multi}
  \|P_t e_{m_0}\|_2 \;\ge\; 1/\sqrt{2}.
  \end{equation}
  Fix such an $m$, and take $\mathbf{x}_t:=\mathbf{x}_t^{(m)}$ and $\mathbf{y}_t:=\mathbf{y}_t^{(m)}$ from \eqref{eq:xtyt-separated}.
  
  Applying $P_t$ to \eqref{eq:haty3-decomp-multi} kills the $\mathbf{v}_t$ component, giving
  $P_t\hat{\mathbf{y}}^{(3)}(\mathbf{z})=P_t\mathbf{r}_t(\mathbf{z})$.
  Therefore,
  \begin{align}
  &\big\|P_t\hat{\mathbf{y}}^{(3)}(\mathbf{x}_t)-P_t\hat{\mathbf{y}}^{(3)}(\mathbf{y}_t)\big\|_2 \notag\\
  &\qquad\le \|P_t\mathbf{r}_t(\mathbf{x}_t)\|_2 + \|P_t\mathbf{r}_t(\mathbf{y}_t)\|_2
  \le 2C_0\eta,
  \label{eq:proj-pred-diff-multi}
  \end{align}
  using \eqref{eq:rt-small-multi}.
  On the other hand, by \eqref{eq:target-j3} we have $P_t\mathbf{y}^{(3)}(\mathbf{z}) = \frac{1}{2} P_t(\mathbf{z} + \mathbf{t})$, so
  \begin{align}
  &\big\|P_t\mathbf{y}^{(3)}(\mathbf{x}_t)-P_t\mathbf{y}^{(3)}(\mathbf{y}_t)\big\|_2 \notag\\
  &\qquad=
  \frac{1}{2}\|P_t(\mathbf{x}_t-\mathbf{y}_t)\|_2 \notag\\
  &\qquad=
  \frac{1}{2}|\mathbf{x}_{t,m}-\mathbf{y}_{t,m}|\cdot \|P_t e_m\|_2 \notag\\
  &\qquad\ge
  \frac{1}{2\sqrt{2}}\varepsilon',
  \label{eq:proj-target-diff-multi}
  \end{align}
  using \eqref{eq:xtyt-separated} and 
  \eqref{eq:proj-em-lb-multi}.
  
  Finally, by the triangle inequality and the uniform loss bound,
  \begin{align*}
  &\big\|P_t\mathbf{y}^{(3)}(\mathbf{x}_t)-P_t\mathbf{y}^{(3)}(\mathbf{y}_t)\big\|_2 \\
  &\qquad\le
  \big\|P_t\hat{\mathbf{y}}^{(3)}(\mathbf{x}_t)-P_t\hat{\mathbf{y}}^{(3)}(\mathbf{y}_t)\big\|_2
  +2\eta \\
  &\qquad\le
  (2C_0+2)\eta,
  \end{align*}
  which contradicts \eqref{eq:proj-target-diff-multi} for all sufficiently small $\eta$.
  This contradiction completes the proof.

  \section{Proof of \cref{thm:relu}}\label{proof:relu}

    We give an explicit zero-loss construction with $\alpha_{i,1}=0$ for all $i$.
    
    \paragraph{Parameters.}
    Set $\mathbf{W}_K=\mathbf{I}$, $\mathbf{W}_V=\mathbf{I}$, and $\mathbf{W}_O=\mathbf{I}$.
    Let $e_r$ denote the $r$-th standard basis vector.
    Recall from \cref{sec:task-def}: coordinate 1 is the BOS indicator; coordinate 2 is the trigger indicator;
    coordinate 3 is the non-trigger non-BOS indicator, with $\mathbf{x}^{(1)}_3=\mathbf{x}^{(j)}_3=0$ and $\mathbf{x}^{(i)}_3=1$ for $i\neq 1,j$.
    Define
    \[ \mathbf{W}_Q \;=\; e_2 (e_2 + e_3)^\top. \]

    \paragraph{Computing the attention weights.}
    Using the ReLU attention formula from \cref{sec:attention-weights}, the unnormalized score from position $i$ to position $k$ is
    \[
    \mathbf{x}^{(i)} \mathbf{W}_Q \mathbf{W}_K^\top (\mathbf{x}^{(k)})^\top
    \;=\; \mathbf{x}^{(i)}_2 \cdot (\mathbf{x}^{(k)}_2 + \mathbf{x}^{(k)}_3).
    \]
    
    Fix a trigger position $j\in\{2,\dots,L\}$. For any non-trigger position $i \neq j$, we have $\mathbf{x}^{(i)}_2=0$, so all scores are zero and hence $\alpha_{i,k}=\mathrm{ReLU}(0)/n_i = 0$ for all $k\le i$. In particular, $\alpha_{i,1}=0$.

    For the trigger position $i=j$, we have $\mathbf{x}^{(j)}_2=1$. The score to position $k$ equals $\mathbf{x}^{(k)}_2 + \mathbf{x}^{(k)}_3$. This is $1$ for non-trigger non-BOS tokens (if such exist) $k\in\{2,\dots,j-1\}$ (where $\mathbf{x}^{(k)}_3=1$) and for the trigger token $k=j$ (where $\mathbf{x}^{(k)}_2=1$). It is $0$ for $k=1$ (\texttt{BOS}). After applying ReLU and dividing by $n_j = j-1$, we obtain
    \begin{align*}
    \alpha_{j,k} &= \tfrac{1}{j-1} \quad \text{for } 2\le k\le j, \\
    \alpha_{j,k} &= 0 \quad\;\;\;\, \text{otherwise}.
    \end{align*}

    \paragraph{Verifying the output.}
    At non-trigger positions, all attention weights are zero, so $f(\mathbf{x})^{(i)} = \mathbf{0} = \mathbf{y}^{(i)}$.
    At the trigger position $i=j$, using $\mathbf{W}_O=\mathbf{W}_V=\mathbf{I}$:
    \begin{align*}
    f(\mathbf{x})^{(j)}
    &= \mathbf{W}_O \sum_{k=1}^{j} \alpha_{j,k}\, \mathbf{W}_V \mathbf{x}^{(k)} \\
    &= \frac{1}{j-1}\sum_{k=2}^{j} \mathbf{x}^{(k)}
    = \overline{\mathbf{x}}
    = \mathbf{y}^{(j)}.
    \end{align*}
    Thus $\mathcal{L}(f)=0$ and $\alpha_{i,1}=0$ for all $i$, completing the proof.

\section{Lemmas}

\begin{lemma}\label{lem:bos-small}
Let $f$ be a single-layer softmax self-attention model as in \S\ref{sec:attention-weights} and write $\mathbf{V}:=\mathbf{W}_O\mathbf{W}_V$. If the loss $\mathcal{L}(f)$ (see \cref{sec:task-def}) satisfies $\mathcal{L}(f)\le \eta$, then
\[
\|\mathbf{V}e_1\|_2 \le \eta .
\]
\end{lemma}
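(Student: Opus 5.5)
The plan is to evaluate the model on the output at position $1$, the \texttt{BOS} position itself. By causality, the softmax at $i=1$ has a single key, so $\alpha_{1,1}=1$ regardless of the query/key parameters. Hence the model output at position $1$ is exactly
\[
f(\mathbf{x})^{(1)} = \mathbf{W}_O\,\alpha_{1,1}\,\mathbf{W}_V\mathbf{x}^{(1)} = \mathbf{W}_O\mathbf{W}_V e_1 = \mathbf{V}e_1,
\]
since by the task definition the \texttt{BOS} token is $\mathbf{x}^{(1)}=e_1$ for every sequence in the support of $\mathcal{D}$.

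Next, we identify the target at position $1$. The trigger position $j$ is sampled from $\{2,\ldots,L\}$, so position $1$ is never the trigger. Therefore the target satisfies $\mathbf{y}^{(1)}=\mathbf{0}$ for every $(\mathbf{x},\mathbf{y})$ in the support.

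Finally, we invoke the worst-case loss. Taking any single element of the support (which is nonempty) and choosing $i=1$ in the maximum defining $\mathcal{L}(f)$ gives
\[
\|\mathbf{V}e_1\|_2=\|\mathbf{y}^{(1)}-f(\mathbf{x})^{(1)}\|_2\le \mathcal{L}(f)\le\eta,
\]
which is the claim.

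There is no real obstacle here. The only point needing care is that the softmax normalization at $i=1$ is over the single key $k=1$, so $\alpha_{1,1}=1$ exactly, independent of $\mathbf{W}_Q,\mathbf{W}_K$. The bound does not even require the reparameterization of Step~1 in the proof of \cref{thm:pre-trigger-app}.
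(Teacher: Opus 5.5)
Your proposal is correct and is essentially the paper's own argument: causality forces $\alpha_{1,1}=1$, so $f(\mathbf{x})^{(1)}=\mathbf{V}e_1$, and since $\mathbf{y}^{(1)}=\mathbf{0}$ the worst-case loss bound gives $\|\mathbf{V}e_1\|_2\le\eta$. You also spell out details the paper leaves implicit: that $\mathbf{x}^{(1)}=e_1$ on the support, that position $1$ is never the trigger, and that no reparameterization is needed.
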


\begin{proof}
By causality, at position $i=1$ we have $\alpha_{1,1}=1$, hence $\hat{\mathbf{y}}^{(1)}=\mathbf{V}e_1$. Since $\mathbf{y}^{(1)}=\mathbf{0}$ and $\|\hat{\mathbf{y}}^{(1)}-\mathbf{y}^{(1)}\|_2\le \mathcal{L}(f)\le \eta$, the claim follows.
\end{proof}

\begin{lemma}\label{lem:softmax-monotone}
Assume the attention mechanism is softmax. Fix any query $\mathbf{q}\in\mathbb{R}^n$ and two candidate sets of keys $S\subseteq T\subset\mathbb{R}^n$. For the softmax probabilities
\begin{align*}
\sigma_S(\mathbf{k}) &= \frac{\exp(\mathbf{q}^\top \mathbf{k})}{\sum_{\mathbf{r}\in S} \exp(\mathbf{q}^\top \mathbf{r})},\\
\sigma_T(\mathbf{k}) &= \frac{\exp(\mathbf{q}^\top \mathbf{k})}{\sum_{\mathbf{r}\in T} \exp(\mathbf{q}^\top \mathbf{r})},
\end{align*}
we have $\sigma_T(\mathbf{k})\le \sigma_S(\mathbf{k})$ for every $\mathbf{k}\in S$.
\end{lemma}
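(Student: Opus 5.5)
The claim is elementary: it compares the two normalizing constants of the softmax. The plan is to write
\[
Z_S := \sum_{\mathbf{r}\in S}\exp(\mathbf{q}^\top\mathbf{r}), \qquad Z_T := \sum_{\mathbf{r}\in T}\exp(\mathbf{q}^\top\mathbf{r}),
\]
so that for any $\mathbf{k}\in S$ both $\sigma_S(\mathbf{k})$ and $\sigma_T(\mathbf{k})$ share the same numerator $\exp(\mathbf{q}^\top\mathbf{k})>0$. It therefore suffices to show $Z_T \ge Z_S>0$.

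For this, split $Z_T = Z_S + \sum_{\mathbf{r}\in T\setminus S}\exp(\mathbf{q}^\top\mathbf{r})$, which is valid because $S\subseteq T$. Every summand is a strictly positive exponential, so the extra sum is nonnegative. Hence $Z_T\ge Z_S$, and $Z_S>0$ since $\mathbf{k}\in S$ makes $S$ nonempty. Dividing the common positive numerator by the larger denominator gives $\sigma_T(\mathbf{k}) = \exp(\mathbf{q}^\top\mathbf{k})/Z_T \le \exp(\mathbf{q}^\top\mathbf{k})/Z_S = \sigma_S(\mathbf{k})$.

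The only point needing care is the bookkeeping of key sets. In the applications (e.g.\ Step~1 and Step~2 of the proof of Theorem~\ref{thm:post-trigger-app}), the keys are token positions, and duplicate token vectors may occur. So I would read $S$ and $T$ as multisets, or index sets of positions with $\mathbf{r}$ ranging over the corresponding token vectors. The argument above is unchanged under this reading, since it uses only additivity and positivity of the summands. No step is a genuine obstacle.
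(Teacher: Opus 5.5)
Your proof is correct and is essentially the paper's argument: both note that the numerator is shared and bound $Z_T = Z_S + \sum_{\mathbf{r}\in T\setminus S}\exp(\mathbf{q}^\top\mathbf{r}) \ge Z_S$. Your remark about reading $S$ and $T$ as multisets (or as position index sets) is a worthwhile bit of care that the paper leaves implicit, since its applications index keys by position and token vectors may coincide.
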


\begin{proof}
The denominators satisfy 
\begin{align*}
\sum_{\mathbf{r}\in T}\exp(\mathbf{q}^\top \mathbf{r})
&=\sum_{\mathbf{r}\in S}\exp(\mathbf{q}^\top \mathbf{r})\\
&\quad+\sum_{\mathbf{r}\in T\setminus S}\exp(\mathbf{q}^\top \mathbf{r})\\
&\ge \sum_{\mathbf{r}\in S}\exp(\mathbf{q}^\top \mathbf{r}),
\end{align*}
while the numerator for a fixed $\mathbf{k}\in S$ is the same in both fractions.
\end{proof}

\begin{lemma}\label{lem:reductions}
Assume the attention mechanism is softmax. Consider any sequence from $\mathcal{D}$ (\cref{sec:task-def}) and any indices $1<i$ and $1<i<h$. Then:
\begin{enumerate}[leftmargin=*,nosep]
\item (Self-reduction) Let $\widetilde{\alpha}_{2,2}$ denote the attention weight on the second token in the length-2 prefix $(\texttt{BOS},\mathbf{x}^{(i)})$, computed with the same $(\mathbf{W}_Q,\mathbf{W}_K)$. Then $\alpha_{i,i}\le \widetilde{\alpha}_{2,2}$.
\item (Pairwise reduction) Let $\widetilde{\alpha}_{3,2}$ denote the attention weight on the second token in the length-3 prefix $(\texttt{BOS},\mathbf{x}^{(i)},\mathbf{x}^{(h)})$, computed with $(\mathbf{W}_Q,\mathbf{W}_K)$. Then $\alpha_{h,i}\le \widetilde{\alpha}_{3,2}$.
\end{enumerate}
\end{lemma}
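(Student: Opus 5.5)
The plan is to derive both parts directly from Lemma~\ref{lem:softmax-monotone}. In a single-layer softmax model the query at position $i$ depends only on $\mathbf{x}^{(i)}$, and the keys depend only on the tokens themselves, not on their positions, since there are no positional encodings in the architecture of \cref{sec:attention-weights}. So if we fix the query token, the attention weight on a key token is a softmax over the multiset of key tokens visible at that query. Removing keys from that set can only increase the weight of any key that remains.

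\emph{Self-reduction.} Fix a sequence from $\mathcal{D}$ and a position $1<i$. We take the query $\mathbf{q}:=\mathbf{W}_K\mathbf{W}_Q^\top(\mathbf{x}^{(i)})^\top$, so that each score is $s_{i,k}=\mathbf{q}^\top\mathbf{x}^{(k)}$. In the original sequence the key set at position $i$ is $T=\{\mathbf{x}^{(1)},\dots,\mathbf{x}^{(i)}\}$, and $\alpha_{i,i}=\sigma_T(\mathbf{x}^{(i)})$. In the prefix $(\texttt{BOS},\mathbf{x}^{(i)})$, position $2$ has the same query token $\mathbf{x}^{(i)}$, hence the same $\mathbf{q}$, and its key set is $S=\{\mathbf{x}^{(1)},\mathbf{x}^{(i)}\}\subseteq T$, because \texttt{BOS} is the fixed vector $e_1$ in every sequence. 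So $\widetilde{\alpha}_{2,2}=\sigma_S(\mathbf{x}^{(i)})$, and Lemma~\ref{lem:softmax-monotone} gives $\alpha_{i,i}\le\widetilde{\alpha}_{2,2}$.

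\emph{Pairwise reduction.} For $1<i<h$, we repeat the argument with the query token $\mathbf{x}^{(h)}$. In the original sequence the key set at position $h$ is $T=\{\mathbf{x}^{(1)},\dots,\mathbf{x}^{(h)}\}$. In the prefix $(\texttt{BOS},\mathbf{x}^{(i)},\mathbf{x}^{(h)})$, position $3$ has query token $\mathbf{x}^{(h)}$ and key set $S=\{\mathbf{x}^{(1)},\mathbf{x}^{(i)},\mathbf{x}^{(h)}\}\subseteq T$. Both $\alpha_{h,i}$ and $\widetilde{\alpha}_{3,2}$ are the softmax weight on $\mathbf{x}^{(i)}$ for the same query, so Lemma~\ref{lem:softmax-monotone} gives $\alpha_{h,i}\le\widetilde{\alpha}_{3,2}$.

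The only real obstacle is bookkeeping. Lemma~\ref{lem:softmax-monotone} is phrased with \emph{sets} of keys, whereas the keys here are indexed by position and could coincide as vectors. I would handle this by reading $S$ and $T$ as index sets, equivalently as multisets. The proof of Lemma~\ref{lem:softmax-monotone} uses only that the denominator over $T$ contains all the terms of the denominator over $S$ plus further nonnegative terms, and that argument holds verbatim for indexed families. The prefixes themselves need not lie in $\mathrm{support}(\mathcal{D})$; the statement only concerns the model's attention weights on them, so nothing more is required.
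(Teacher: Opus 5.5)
Your proposal is correct and matches the paper's proof. Both fix the query at the real position, note that the prefix's key family is a subfamily of the original one (\texttt{BOS} being the fixed vector $e_1$, and there being no positional encodings), and invoke Lemma~\ref{lem:softmax-monotone}. Your reading of $S\subseteq T$ as index families rather than sets of vectors is a worthwhile tightening: the paper's set notation would silently merge coinciding keys (e.g.\ when $\mathbf{W}_K$ is not injective), whereas the monotonicity argument only needs the denominator over $T$ to contain all the terms over $S$ plus nonnegative extras.
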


\begin{proof}
For (1), at real position $i$ the query equals $\mathbf{x}^{(i)}\mathbf{W}_Q$. Let $S$ be the two keys $\{\mathbf{W}_K\mathbf{x}^{(1)},\mathbf{W}_K\mathbf{x}^{(i)}\}$ and $T=\{\mathbf{W}_K\mathbf{x}^{(k)}:k\le i\}$. Lemma~\ref{lem:softmax-monotone} (with this fixed query) gives the claim, noting that $\widetilde{\alpha}_{2,2}=\sigma_S(\mathbf{W}_K\mathbf{x}^{(i)})$ and $\alpha_{i,i}=\sigma_T(\mathbf{W}_K\mathbf{x}^{(i)})$. 

For (2), at real position $h$ the query equals $\mathbf{x}^{(h)}\mathbf{W}_Q$. Let $S=\{\mathbf{W}_K\mathbf{x}^{(1)},\mathbf{W}_K\mathbf{x}^{(i)},\mathbf{W}_K\mathbf{x}^{(h)}\}$ and $T=\{\mathbf{W}_K\mathbf{x}^{(k)}:k\le h\}$; apply Lemma~\ref{lem:softmax-monotone} as before.
\end{proof}

\begin{lemma}\label{lem:self-O-eta}
In the setting of \cref{lem:bos-small}, assume the attention mechanism is softmax. For every sequence in $\mathrm{support}(\mathcal{D})$ and every non-trigger position $1<i \neq j$,
\[
\big\|\alpha_{i,i}\mathbf{V}\mathbf{x}^{(i)}\big\|_2 \le 2\eta .
\]
\end{lemma}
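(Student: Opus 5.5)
We need to prove Lemma~\ref{lem:self-O-eta}: $\|\alpha_{i,i}\mathbf{V}\mathbf{x}^{(i)}\|_2\le 2\eta$ at every non-trigger position $1<i\neq j$.

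The plan is to reduce to the length-$2$ prefix $(\texttt{BOS},\mathbf{x}^{(i)})$. There every attention weight is either the BOS weight or the self weight. The zero target at position $2$ then forces the self contribution to be small, up to the BOS term controlled by Lemma~\ref{lem:bos-small}. Finally, monotonicity (Lemma~\ref{lem:reductions}(1)) transfers the bound back to position $i$ of the original sequence.

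\textbf{Step 1 (embedding the prefix as a valid input).} Since $\mathbf{x}^{(i)}$ is a non-trigger non-\texttt{BOS} token, its coordinate~3 equals one and its content lies in the support of $\mathcal{P}$. Hence $(\texttt{BOS},\mathbf{x}^{(i)})$ can be completed to a sequence in $\mathrm{support}(\mathcal{D})$ whose trigger position is at least $3$; this is possible since $L\ge 4$. By causality, the output at position $2$ of that sequence depends only on the first two tokens. Its weights are $\widetilde\alpha_{2,1}+\widetilde\alpha_{2,2}=1$, so
\[
\hat{\mathbf{y}}^{(2)}=\widetilde{\alpha}_{2,1}\mathbf{V}e_1+\widetilde{\alpha}_{2,2}\mathbf{V}\mathbf{x}^{(i)}.
\]
Position $2$ is non-trigger, so its target is $\mathbf{0}$ and $\|\hat{\mathbf{y}}^{(2)}\|_2\le\eta$.

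\textbf{Step 2 (bounding the self term in the prefix).} Combining this with Lemma~\ref{lem:bos-small}, which gives $\|\widetilde\alpha_{2,1}\mathbf{V}e_1\|_2\le\|\mathbf{V}e_1\|_2\le\eta$, the triangle inequality yields
\[
\|\widetilde{\alpha}_{2,2}\mathbf{V}\mathbf{x}^{(i)}\|_2\le 2\eta.
\]

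\textbf{Step 3 (transfer back to position $i$).} Lemma~\ref{lem:reductions}(1) gives $0\le\alpha_{i,i}\le\widetilde{\alpha}_{2,2}$. Therefore
\[
\|\alpha_{i,i}\mathbf{V}\mathbf{x}^{(i)}\|_2\le\|\widetilde\alpha_{2,2}\mathbf{V}\mathbf{x}^{(i)}\|_2\le 2\eta.
\]

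The main subtlety is Step~1: we must check that the transplanted prefix is genuinely in the support, so that the worst-case loss applies. This needs a completion with the trigger strictly after position $2$, which exists because $L\ge 4$ and the remaining content can be drawn from the support of $\mathcal{P}$. We also need that the query and key at position $2$ are the same as at position $i$. That holds because they depend only on the token vectors and not on absolute positions, since there are no positional encodings.
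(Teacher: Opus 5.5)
Your proof is correct and follows the paper's argument: reduce to the prefix $(\texttt{BOS},\mathbf{x}^{(i)})$, bound the self term there using the zero target together with Lemma~\ref{lem:bos-small}, then transfer the bound to position $i$ via Lemma~\ref{lem:reductions}(1). Your explicit check that this prefix completes to a valid sequence with trigger position $\ge 3$, so that the worst-case loss applies, makes rigorous a step the paper leaves implicit.
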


\begin{proof}
Fix $i$ and consider the length-2 prefix $(\texttt{BOS},\mathbf{x}^{(i)})$. At its position $2$ (which is pre-trigger), the output equals
\[
\hat{\mathbf{y}}^{(2)} = \widetilde{\alpha}_{2,1}\mathbf{V}e_1 + \widetilde{\alpha}_{2,2}\mathbf{V}\mathbf{x}^{(i)},
\]
with target $\mathbf{y}^{(2)}=\mathbf{0}$. Hence
\begin{align*}
\big\|\widetilde{\alpha}_{2,2}\mathbf{V}\mathbf{x}^{(i)}\big\|_2
&\le \|\hat{\mathbf{y}}^{(2)}\|_2+\|\widetilde{\alpha}_{2,1}\mathbf{V}e_1\|_2\\
&\le \eta+\eta = 2\eta,
\end{align*}
using Lemma~\ref{lem:bos-small} for the BOS term. By Lemma~\ref{lem:reductions}(1), $\alpha_{i,i}\le \widetilde{\alpha}_{2,2}$, and multiplying both sides by the fixed vector $\mathbf{V}\mathbf{x}^{(i)}$ yields the result.
\end{proof}

\begin{lemma}\label{lem:pairwise-O-eta}
  In the setting of \cref{lem:bos-small}, assume the attention mechanism is softmax. For every sequence in $\mathrm{support}(\mathcal{D})$ and every pair of non-trigger indices $1<i<h$ with $i,h \neq j$:
\[
\big\|\alpha_{h,i}\mathbf{V}\mathbf{x}^{(i)}\big\|_2 \le 4\eta .
\]
\end{lemma}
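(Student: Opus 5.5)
We reduce to the length-$3$ prefix $(\texttt{BOS},\mathbf{x}^{(i)},\mathbf{x}^{(h)})$, in which both non-\texttt{BOS} tokens are non-trigger tokens. The reduced sequence is a legitimate input: embed it as the first three tokens of a valid sequence in $\mathrm{support}(\mathcal{D})$ whose trigger sits at some position $j'\ge 4$. This is possible since $L\ge 4$. Positions $2$ and $3$ are then pre-trigger, so both targets are $\mathbf{0}$. Because attention is causal, the outputs at positions $2,3$ depend only on this prefix. Write $\widetilde{\alpha}_{3,k}$ for the attention weights at position $3$ of this prefix.

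At position $3$ the output is
\[
\hat{\mathbf{y}}^{(3)}=\widetilde{\alpha}_{3,1}\mathbf{V}e_1+\widetilde{\alpha}_{3,2}\mathbf{V}\mathbf{x}^{(i)}+\widetilde{\alpha}_{3,3}\mathbf{V}\mathbf{x}^{(h)},
\]
and $\|\hat{\mathbf{y}}^{(3)}\|_2\le\eta$. We bound the two nuisance terms as follows.
\begin{itemize}
\item The \texttt{BOS} term is at most $\eta$ by Lemma~\ref{lem:bos-small}, using $\widetilde{\alpha}_{3,1}\le 1$.
\item For the self term, position $3$ in the prefix is a non-trigger position with $i=3$, so Lemma~\ref{lem:self-O-eta} applied to the embedding sequence gives $\|\widetilde{\alpha}_{3,3}\mathbf{V}\mathbf{x}^{(h)}\|_2\le 2\eta$.
\end{itemize}
The triangle inequality then yields
\[
\|\widetilde{\alpha}_{3,2}\mathbf{V}\mathbf{x}^{(i)}\|_2\le \eta+\eta+2\eta=4\eta.
\]

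Next we return to the original sequence. Lemma~\ref{lem:reductions}(2) gives $\alpha_{h,i}\le\widetilde{\alpha}_{3,2}$, with the same query $\mathbf{x}^{(h)}\mathbf{W}_Q$ and a key subset. Multiplying by the fixed nonnegative quantity $\|\mathbf{V}\mathbf{x}^{(i)}\|_2$ gives $\|\alpha_{h,i}\mathbf{V}\mathbf{x}^{(i)}\|_2\le 4\eta$.

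The main point needing care is legitimacy of the reduced sequence. It must lie in $\mathrm{support}(\mathcal{D})$ so that the worst-case loss bound applies at positions $2$ and $3$. Since the content coordinates of $\mathbf{x}^{(i)},\mathbf{x}^{(h)}$ come from a valid sequence, they are in the support of $\mathcal{P}$. Moving these tokens to positions $2$ and $3$ and appending arbitrary supported tokens with a trigger at position $4$ stays in the support, because coordinates are i.i.d. and position-exchangeable. All other steps are routine.
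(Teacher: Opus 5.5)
Your proposal is correct and follows the paper's proof essentially step for step. You reduce to the prefix $(\texttt{BOS},\mathbf{x}^{(i)},\mathbf{x}^{(h)})$, bound the \texttt{BOS} term via Lemma~\ref{lem:bos-small} and the self term via Lemma~\ref{lem:self-O-eta}, and transfer the bound back with Lemma~\ref{lem:reductions}(2). Your explicit check that this prefix embeds in a supported sequence with trigger at a position $\ge 4$ (using $L\ge 4$) makes precise a step the paper leaves implicit.
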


\begin{proof}
Consider first the length-3 prefix $(\texttt{BOS},\mathbf{x}^{(i)},\mathbf{x}^{(h)})$. At position $3$ (pre-trigger), with target $\mathbf{y}^{(3)}=\mathbf{0}$,
\[
\hat{\mathbf{y}}^{(3)}
= \widetilde{\alpha}_{3,1}\mathbf{V}e_1
+ \widetilde{\alpha}_{3,2}\mathbf{V}\mathbf{x}^{(i)}
+ \widetilde{\alpha}_{3,3}\mathbf{V}\mathbf{x}^{(h)}.
\]
Therefore,
\begin{align*}
\big\|\widetilde{\alpha}_{3,2}\mathbf{V}\mathbf{x}^{(i)}\big\|_2
&\le \|\hat{\mathbf{y}}^{(3)}\|_2+\|\widetilde{\alpha}_{3,1}\mathbf{V}e_1\|_2\\
&\quad+\|\widetilde{\alpha}_{3,3}\mathbf{V}\mathbf{x}^{(h)}\|_2\\
&\le \eta+\eta+2\eta = 4\eta,
\end{align*}
using Lemma~\ref{lem:bos-small} for the BOS term and Lemma~\ref{lem:self-O-eta} for the self term. By Lemma~\ref{lem:reductions}(2), $\alpha_{h,i}\le \widetilde{\alpha}_{3,2}$. Multiplying by $\mathbf{V}\mathbf{x}^{(i)}$ gives the result.
\end{proof}

\begin{lemma}\label{lem:trigger-large}
In the setting of \cref{lem:bos-small}, assume the attention mechanism is softmax. For every sequence in $\mathrm{support}(\mathcal{D})$ with trigger at position $j$,
\[
\|\mathbf{V}\mathbf{x}^{(j)}\|_2 \ge 1 - 2\eta.
\]
\end{lemma}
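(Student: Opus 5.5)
The lower bound is forced by the output at the trigger coordinate, coordinate~$2$. At the trigger position $j$ the target is $\mathbf{y}^{(j)} = (j-1)^{-1}\sum_{k=2}^{j}\mathbf{x}^{(k)}$. Only the trigger token has a nonzero second coordinate, and it equals one there, so $(\mathbf{y}^{(j)})_2 = 1/(j-1)$. Note that this bound is only of order $1/L$, which is too weak on its own to give $1-2\eta$. I will therefore first reduce to a sequence where the trigger sits at position $2$, where the target's second coordinate is exactly $1$.

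The key observation is that $\mathbf{V}\mathbf{x}^{(j)}$ depends only on the trigger token itself, not on its position. Given a sequence in $\mathrm{support}(\mathcal{D})$ with trigger token $\mathbf{t}=\mathbf{x}^{(j)}$, I form another valid sequence with the same token $\mathbf{t}$ at position $2$. This sequence has \texttt{BOS} at position $1$ and arbitrary non-trigger tokens, with content coordinates in the support of $\mathcal{P}$, at positions $3,\dots,L$. It lies in the support of $\mathcal{D}$ since $j=2$ is an allowed trigger position. Its target at position $2$ is $\frac{1}{1}\mathbf{x}^{(2)} = \mathbf{t}$. The prediction there is
\[
\hat{\mathbf{y}}^{(2)} = \alpha_{2,1}\mathbf{V}e_1 + \alpha_{2,2}\mathbf{V}\mathbf{t}.
\]

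By the uniform loss bound, $\|\hat{\mathbf{y}}^{(2)} - \mathbf{t}\|_2 \le \eta$. By Lemma~\ref{lem:bos-small}, $\|\alpha_{2,1}\mathbf{V}e_1\|_2 \le \eta$ because $\alpha_{2,1}\le 1$. The triangle inequality then gives
\[
\|\alpha_{2,2}\mathbf{V}\mathbf{t}\|_2 \ge \|\mathbf{t}\|_2 - 2\eta.
\]
Since $\mathbf{t}_2 = 1$, we have $\|\mathbf{t}\|_2 \ge 1$. And since $0<\alpha_{2,2}\le 1$,
\[
\|\mathbf{V}\mathbf{t}\|_2 \ge \|\alpha_{2,2}\mathbf{V}\mathbf{t}\|_2 \ge 1-2\eta,
\]
which is the claim for the original sequence.

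The only step needing care is checking that the transplanted sequence is genuinely in the support. This holds because the content coordinates are i.i.d. and their values are independent of position, so the trigger token's content coordinates remain admissible at position $2$. Everything else is a two-line triangle inequality, so I do not expect any substantive obstacle.
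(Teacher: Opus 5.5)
Your proof is correct and follows essentially the paper's route: move to a sequence with the trigger at position $2$, where the target is the trigger token itself, then combine the uniform loss bound, Lemma~\ref{lem:bos-small}, and $\alpha_{2,2}\le 1$. The differences are cosmetic: you spell out the transplant of the trigger token to position $2$, which the paper leaves implicit, and you lower-bound through $\|\mathbf{t}\|_2\ge 1$ and the reverse triangle inequality rather than reading off coordinate $2$ of the output.
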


\begin{proof}
Consider a sequence where the trigger is at position $j=2$. The target output at position 2 is $\mathbf{y}^{(2)} = \mathbf{x}^{(2)}$.
The model output is
\[
\hat{\mathbf{y}}^{(2)} = \alpha_{2,1}\mathbf{V}e_1 + \alpha_{2,2}\mathbf{V}\mathbf{x}^{(2)}.
\]
We know $\|\mathbf{y}^{(2)} - \hat{\mathbf{y}}^{(2)}\|_2 \le \eta$ and $\|\mathbf{V}e_1\|_2 \le \eta$ (\cref{lem:bos-small}).
By triangle inequality, $\|\mathbf{y}^{(2)} - \alpha_{2,2}\mathbf{V}\mathbf{x}^{(2)}\|_2 \le 2\eta$.
Since $(\mathbf{y}^{(2)})_2 = 1$ (trigger indicator), we have $|1 - \alpha_{2,2}(\mathbf{V}\mathbf{x}^{(2)})_2| \le 2\eta$.
Since $\alpha_{2,2} \le 1$, this implies $(\mathbf{V}\mathbf{x}^{(2)})_2 \ge 1 - 2\eta$, so $\|\mathbf{V}\mathbf{x}^{(2)}\|_2 \ge 1 - 2\eta$.
\end{proof}

\begin{lemma}\label{lem:axis-separated}
  Let $n \in \mathbb{N}_{\geq 1}$ and $X=(X_1,\dots,X_n)\sim\mu^{\otimes n}$, where $\mu$ has a density $g$ bounded by $M:=\sup_{x\in\mathbb{R}} g(x) < \infty$. 
  Fix $\delta\in(0,1]$. Then there exists some $\varepsilon' \in \mathbb{R}_{>0}$ such that if a measurable set $E\subset\mathbb{R}^n$ satisfies $\mathbb{P}(X\in E)\ge \delta$, then for every coordinate $j\in\{1,\dots,n\}$ there exist $x,y\in E$ such that
  \[
  x_k=y_k\; \text{for all }k\neq j,
  \quad\text{and}\quad 
  |x_j-y_j|\;\ge\;\varepsilon',
  \]
  \end{lemma}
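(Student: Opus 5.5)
We prove Lemma~\ref{lem:axis-separated} by a fiber (Fubini) argument. The probability bound is turned into a lower bound on the Lebesgue measure of some one-dimensional slice of $E$ along the $j$-th axis, and a set of large one-dimensional measure cannot fit inside a short interval. Fix a coordinate $j$ and write a point as $(x_j, x_{-j})$ with $x_{-j}\in\mathbb{R}^{n-1}$. For each $x_{-j}$ define the fiber
\[
E_{x_{-j}} := \{u\in\mathbb{R} : (u,x_{-j})\in E\}.
\]
This set is measurable for almost every $x_{-j}$ (indeed for every $x_{-j}$, since $E$ is Borel or after completion).

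By independence, $\mathbb{P}(X\in E) = \mathbb{E}_{X_{-j}}\big[\mu(E_{X_{-j}})\big]$, where $X_{-j}\sim\mu^{\otimes(n-1)}$. Since this expectation is at least $\delta$, there exists at least one $x_{-j}$ (in fact a set of positive probability) with $\mu(E_{x_{-j}})\ge\delta$. Now use the density bound:
\[
\mu(E_{x_{-j}}) = \int_{E_{x_{-j}}} g \le M\,\lambda(E_{x_{-j}}),
\]
where $\lambda$ is Lebesgue measure. Hence $\lambda(E_{x_{-j}})\ge \delta/M$.

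Next, a measurable subset $A\subset\mathbb{R}$ with $\lambda(A)\ge \delta/M$ satisfies $\sup A - \inf A \ge \delta/M$, because $A$ is contained in the interval $[\inf A,\sup A]$. So we can choose $u,w\in A$ with $|u-w|\ge \delta/(2M)$. Setting $x=(u,x_{-j})$ and $y=(w,x_{-j})$ gives two points of $E$ that agree off coordinate $j$ and differ by at least $\varepsilon':=\delta/(2M)$ in coordinate $j$. This $\varepsilon'$ depends only on $\delta$ and $M$, so it is uniform over $E$ and over $j$, as the statement requires.

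The only step needing care is the measure-theoretic bookkeeping: measurability of the fibers and the validity of Fubini/Tonelli for the product measure $\mu^{\otimes n}$. Both are standard for a measurable $E$ and a product of absolutely continuous measures. For the application in the paper, note that the relevant event sets (e.g.\ $S_t$, or sets defined by continuous attention weights) are Borel, so no completion issues arise. When the lemma is applied to content coordinates only, the indicator coordinates are fixed constants and the same argument runs on the content subspace $\mathbb{R}^{n-3}$.
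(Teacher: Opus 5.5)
Your proof is correct and follows essentially the same Fubini/fiber argument as the paper, arriving at the same constant $\varepsilon'=\delta/(2M)$. The only cosmetic difference is that you first extract a fiber with $\mu(E_{x_{-j}})\ge\delta$ and then apply $\mu\le M\lambda$, whereas the paper applies the density bound inside the integral before choosing the fiber; your interval-hull justification of the diameter bound is, if anything, slightly cleaner.
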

  
  \begin{proof}
  Fix $j$ and, for $z\in\mathbb{R}^{n-1}$, set $E_j(z):=\{t\in\mathbb{R}:(z_1,\dots,z_{j-1},t,z_{j+1},\dots)\in E\}$. By Fubini and independence,
  \[
  \mathbb{P}(X\in E)\;=\;\int \mu\!\left(E_j(z)\right)\,d\mu^{\otimes (n-1)}(z).
  \]
  Since $\mu$ has density $g$ bounded by $M$, for any measurable $A\subset\mathbb{R}$ we have $\mu(A)\le M\,\lambda(A)$, where $\lambda$ is the Lebesgue measure. Hence
  \begin{align*}
  \delta &\;\le\; \int \mu(E_j(z))\,d\mu^{\otimes (n-1)}(z)\\
  &\;\le\; M\int \lambda(E_j(z))\,d\mu^{\otimes (n-1)}(z).
  \end{align*}
  Therefore there exists $z$ with $\lambda(E_j(z))\ge \delta/M$. Any set $A\subset\mathbb{R}$ with Lebesgue measure $\lambda(A)$ has diameter at least $\lambda(A)-\eta$ for any $\eta \in \mathbb{R}_{>0}$, so we can choose $t_1,t_2\in E_j(z)$ with $|t_1-t_2|\ge \delta/M - \eta$ with $\eta<\delta/2M$. Setting $\varepsilon'=\delta/2M$ and taking $x,y$ to match $z$ on all coordinates $k\neq j$ and have $j$-th coordinates $t_1,t_2$ respectively gives the claim.
  \end{proof}

\begin{lemma}\label{lem:multilayer-unroll}
Let $f=f^{(D)}\circ\cdots\circ f^{(1)}$ be a $D$-layer causal softmax self-attention model as in \S\ref{sec:attention-weights} .
For each layer $d\in\{1,\ldots,D\}$ write
\[
\mathbf{V}^{(d)} \;:=\; \mathbf{W}_O^{(d)}\mathbf{W}_V^{(d)}.
\]
\[
  \mathbf{V} \;:=\; \mathbf{V}^{(D)}\mathbf{V}^{(D-1)}\cdots \mathbf{V}^{(1)}
\]
Then for every input sequence $\mathbf{x}$ and every position $i\in[L]$, there exist coefficients
$\beta_{i,1}(\mathbf{x}),\ldots,\beta_{i,i}(\mathbf{x})$ such that
\begin{equation}\label{eq:multilayer-unroll}
f(\mathbf{x})^{(i)} \;=\; \sum_{k=1}^{i} \beta_{i,k}(\mathbf{x})\,\mathbf{V}\mathbf{x}^{(k)}.
\end{equation}
Moreover, for each $i$ we have $\beta_{i,k}(\mathbf{x})\ge 0$ for all $k\le i$ and
\[
\sum_{k=1}^{i}\beta_{i,k}(\mathbf{x}) \;=\; 1.
\]
\end{lemma}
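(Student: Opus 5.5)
We argue by induction on the depth $D$. The multi-layer model has no residual connections, so $f=f^{(D)}\circ\cdots\circ f^{(1)}$, and each layer $d$ computes $h^{(d)}(\mathbf{x})^{(i)}=\sum_{k\le i}\alpha^{(d)}_{i,k}(\mathbf{x})\,\mathbf{V}^{(d)}h^{(d-1)}(\mathbf{x})^{(k)}$ with $h^{(0)}=\mathbf{x}$. The attention weights $\alpha^{(d)}_{i,k}$ are computed from the layer-$(d-1)$ representations and may depend on $\mathbf{x}$ in an arbitrary way; all we use about them is that they are nonnegative and sum to one over $k\le i$. We will prove the stronger claim that for every $d$ and $i$ there are coefficients $\beta^{(d)}_{i,k}(\mathbf{x})\ge 0$ with $\sum_{k\le i}\beta^{(d)}_{i,k}=1$ and
\[
h^{(d)}(\mathbf{x})^{(i)}=\sum_{k\le i}\beta^{(d)}_{i,k}(\mathbf{x})\,\mathbf{V}^{(d)}\cdots\mathbf{V}^{(1)}\mathbf{x}^{(k)}.
\]
The lemma is the case $d=D$, with $\beta_{i,k}:=\beta^{(D)}_{i,k}$.

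For the base case $d=1$, take $\beta^{(1)}_{i,k}=\alpha^{(1)}_{i,k}$. This is exactly the single-layer output formula, since $\mathbf{W}_O\sum_k\alpha_{i,k}\mathbf{W}_V\mathbf{x}^{(k)}=\sum_k\alpha_{i,k}\mathbf{V}^{(1)}\mathbf{x}^{(k)}$ by linearity. For the inductive step, substitute the hypothesis for $h^{(d-1)}(\mathbf{x})^{(k)}$ into layer $d$. Using linearity of $\mathbf{V}^{(d)}$, we get
\[
h^{(d)}(\mathbf{x})^{(i)}=\sum_{k\le i}\alpha^{(d)}_{i,k}\sum_{m\le k}\beta^{(d-1)}_{k,m}\,\mathbf{V}^{(d)}\cdots\mathbf{V}^{(1)}\mathbf{x}^{(m)}.
\]
Swap the order of summation and set
\[
\beta^{(d)}_{i,m}:=\sum_{k=m}^{i}\alpha^{(d)}_{i,k}\,\beta^{(d-1)}_{k,m}.
\]
The index range $k\ge m$ is forced by causality, which makes the new coefficients defined for $m\le i$ and keeps the representation causal.

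Nonnegativity is immediate, since $\beta^{(d)}_{i,m}$ is a sum of products of nonnegative terms. For normalization,
\[
\sum_{m\le i}\beta^{(d)}_{i,m}=\sum_{k\le i}\alpha^{(d)}_{i,k}\sum_{m\le k}\beta^{(d-1)}_{k,m}=\sum_{k\le i}\alpha^{(d)}_{i,k}=1.
\]
The first equality is the exchange of sums, the second uses the inductive hypothesis, and the third uses the softmax normalization. In matrix form, $\beta^{(d)}$ is the product of lower-triangular row-stochastic matrices $A^{(d)}\cdots A^{(1)}$, and row-stochasticity and triangularity are both closed under products.

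The main point to state carefully is that the $\beta$'s depend on $\mathbf{x}$ nonlinearly through every layer's attention, while the value map $\mathbf{V}$ does not depend on $\mathbf{x}$. This separation is what the later proof of Theorem~\ref{thm:multilayer} relies on: it treats $\mathbf{V}_t$ as fixed and the $\beta$'s as arbitrary convex weights. The induction goes through exactly because the model has no residual connections or MLP nonlinearities between layers.
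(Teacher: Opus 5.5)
Your proposal is correct and follows essentially the same route as the paper: the same induction over layers with $\beta^{(1)}=\alpha^{(1)}$ and the recursion $\beta^{(d)}_{i,m}=\sum_{k=m}^{i}\alpha^{(d)}_{i,k}\beta^{(d-1)}_{k,m}$. Nonnegativity and unit row sums are inherited from each layer's attention row being a probability vector, as in the paper. Your remark about products of lower-triangular row-stochastic matrices is a compact restatement of that same argument.
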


\begin{proof}
Let $\mathbf{z}^{(0)}:=\mathbf{x}$ and for $d\ge 1$ let $\mathbf{z}^{(d)}:=f^{(d)}(\mathbf{z}^{(d-1)})$.
Write $\alpha^{(d)}_{i,k}$ for the (softmax) attention weight in layer $d$ from position $i$ to key $k\le i$.
By definition of a single layer,
\[
\mathbf{z}^{(d)}{}^{(i)} \;=\; \sum_{k\le i}\alpha^{(d)}_{i,k}\,\mathbf{V}^{(d)}\mathbf{z}^{(d-1)}{}^{(k)}.
\]
Define $\beta^{(1)}_{i,k}:=\alpha^{(1)}_{i,k}$, and for $d\ge 2$ define recursively
\[
\beta^{(d)}_{i,k}
\;:=\;
\sum_{\ell:\,k\le \ell\le i}\alpha^{(d)}_{i,\ell}\,\beta^{(d-1)}_{\ell,k}.
\]
A direct induction on $d$ gives
\[
\mathbf{z}^{(d)}{}^{(i)}
\;=\;
\sum_{k\le i}\beta^{(d)}_{i,k}\,\mathbf{V}^{(d)}\cdots \mathbf{V}^{(1)}\mathbf{x}^{(k)}.
\]
Nonnegativity and the row-sum identity follow since each $\alpha^{(d)}_{i,\cdot}$ is a probability vector.
Taking $d=D$ and setting $\beta_{i,k}:=\beta^{(D)}_{i,k}$ yields \eqref{eq:multilayer-unroll}.
\end{proof}

\begin{lemma}\label{lem:beta22-product}
In the setting of Lemma~\ref{lem:multilayer-unroll}, for any input sequence $\mathbf{x}$ we have
\[
\beta_{2,2}(\mathbf{x})
\;=\;
\prod_{d=1}^{D}\alpha^{(d)}_{2,2}(\mathbf{x}),
\]
where $\alpha^{(d)}_{2,2}(\mathbf{x})$ is the attention weight at position $2$ attending to position $2$ in layer $d$.
\end{lemma}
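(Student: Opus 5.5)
We prove the identity $\beta_{2,2}(\mathbf{x})=\prod_{d=1}^{D}\alpha^{(d)}_{2,2}(\mathbf{x})$ by induction on the layer index $d$, using the recursive definition of $\beta^{(d)}_{i,k}$ from the proof of Lemma~\ref{lem:multilayer-unroll}. The inductive claim is that $\beta^{(d)}_{2,2}(\mathbf{x})=\prod_{d'=1}^{d}\alpha^{(d')}_{2,2}(\mathbf{x})$ for every $d\in\{1,\ldots,D\}$. Here $\alpha^{(d')}_{2,2}(\mathbf{x})$ denotes the layer-$d'$ attention weight computed on that layer's input $\mathbf{z}^{(d'-1)}$, which is a function of $\mathbf{x}$.

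For the base case $d=1$, the definition gives $\beta^{(1)}_{2,2}=\alpha^{(1)}_{2,2}$ directly.

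For the inductive step, take $d\ge 2$ and write out the recursion with $i=k=2$:
\[
\beta^{(d)}_{2,2}=\sum_{\ell:\,2\le \ell\le 2}\alpha^{(d)}_{2,\ell}\,\beta^{(d-1)}_{\ell,2}=\alpha^{(d)}_{2,2}\,\beta^{(d-1)}_{2,2}.
\]
The sum has exactly one term. Causality forces $\ell\le i=2$, and the lower limit $\ell\ge k=2$ removes $\ell=1$. The reason is that the BOS position $\ell=1$ never routes mass back to key $2$: the coefficient $\beta^{(d-1)}_{1,2}$ is not even defined, since $k\le \ell$ is required. Applying the inductive hypothesis to $\beta^{(d-1)}_{2,2}$ gives the product formula for $d$. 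Taking $d=D$ and recalling that $\beta_{2,2}:=\beta^{(D)}_{2,2}$ finishes the proof.

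There is no real obstacle here. The only care needed is to check that the index constraint $k\le\ell\le i$ in the recursion leaves exactly one surviving term, which is what makes position $2$ special. This is also why the multi-layer proof uses position $2$: there, "no sink in any layer" converts directly into a lower bound $\beta_{2,2}\ge\varepsilon_0^D$.
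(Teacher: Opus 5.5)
Your proof is correct and takes the same approach as the paper's one-line proof, which notes that by causality position $1$ never depends on token $2$. That is exactly your observation that the recursion for $\beta^{(d)}_{2,2}$ keeps only the $\ell=2$ term, so $\beta^{(d)}_{2,2}=\alpha^{(d)}_{2,2}\,\beta^{(d-1)}_{2,2}$ and induction gives the product; you simply write out the index bookkeeping and the induction explicitly.
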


\begin{proof}
In the recursion from the proof of Lemma~\ref{lem:multilayer-unroll}, note that position $1$ is causal and thus never depends on token $2$, directly yielding the product formula.
\end{proof}

\begin{lemma}\label{lem:bos-small-multilayer}
In the setting of Lemma~\ref{lem:multilayer-unroll}, if the loss $\mathcal{L}(f)$ (see \cref{sec:task-def}) satisfies $\mathcal{L}(f)\le \eta$ then
\[
\|\mathbf{V}e_1\|_2 \;\le\; \eta.
\]
\end{lemma}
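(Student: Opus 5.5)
The plan is to mirror the proof of Lemma~\ref{lem:bos-small}, with the single-layer attention replaced by the unrolled representation of Lemma~\ref{lem:multilayer-unroll}. The key observation is causality. At position $i=1$, every layer can attend only to position $1$, so $\alpha^{(d)}_{1,1}=1$ for every $d\in\{1,\ldots,D\}$.

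First, I will check that $\beta_{1,1}(\mathbf{x})=1$. In the recursion defining $\beta^{(d)}_{i,k}$ in the proof of Lemma~\ref{lem:multilayer-unroll}, we have $\beta^{(1)}_{1,1}=\alpha^{(1)}_{1,1}=1$. For the inductive step, $\beta^{(d)}_{1,1}=\alpha^{(d)}_{1,1}\beta^{(d-1)}_{1,1}=1$, since the sum over $\ell$ with $1\le\ell\le 1$ has only one term. Alternatively, this follows directly from the row-sum identity $\sum_{k\le 1}\beta_{1,k}=1$ stated in Lemma~\ref{lem:multilayer-unroll}.

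Substituting into \eqref{eq:multilayer-unroll} gives $f(\mathbf{x})^{(1)}=\mathbf{V}\mathbf{x}^{(1)}=\mathbf{V}e_1$, because the \texttt{BOS} token is exactly $e_1$ for every sequence in the support of $\mathcal{D}$. The target at position $1$ is $\mathbf{y}^{(1)}=\mathbf{0}$, so the worst-case loss gives $\|\mathbf{V}e_1\|_2=\|f(\mathbf{x})^{(1)}-\mathbf{y}^{(1)}\|_2\le\mathcal{L}(f)\le\eta$, which is the claim.

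There is no real obstacle here. The only point to state explicitly is that the intermediate representations at position $1$ depend only on $\mathbf{x}^{(1)}$ through the product $\mathbf{V}^{(D)}\cdots\mathbf{V}^{(1)}$. This holds because each layer has no residual connection in the model of \S\ref{sec:attention-weights}, so $\mathbf{z}^{(d)}{}^{(1)}=\mathbf{V}^{(d)}\mathbf{z}^{(d-1)}{}^{(1)}$ at every layer.
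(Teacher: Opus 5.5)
Your proposal is correct and follows the paper's own proof. Causality forces every layer to attend only to position $1$, so $f(\mathbf{x})^{(1)}=\mathbf{V}e_1$, and the zero target at the \texttt{BOS} position bounds this vector's norm by the loss. Your explicit check that $\beta_{1,1}=1$, and your remark that the model has no residual connection, fill in details the paper leaves implicit.
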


\begin{proof}
By causality, at position $i=1$ every layer attends only to position $1$, hence $f(\mathbf{x})^{(1)}=\mathbf{V}\mathbf{x}^{(1)}=\mathbf{V}e_1$.
Since $\mathbf{y}^{(1)}=\mathbf{0}$ and $\|f(\mathbf{x})^{(1)}-\mathbf{y}^{(1)}\|_2\le \mathcal{L}(f)\le \eta$, the claim follows.
\end{proof}

\begin{lemma}\label{lem:beta22-O-eta}
In the setting of Lemma~\ref{lem:multilayer-unroll}, assume softmax attention and that the loss $\mathcal{L}(f)$ (see \cref{sec:task-def}) satisfies $\mathcal{L}(f)\le \eta$.
Then for every $\mathbf{x}$ in $\mathrm{support}(\mathcal{D})$ with trigger position $j\ge 3$ we have that
\[
\big\|\beta_{2,2}(\mathbf{x})\,\mathbf{V}\mathbf{x}^{(2)}\big\|_2 \;\le\; 2\eta.
\]
\end{lemma}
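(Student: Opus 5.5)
The plan is to reduce to a length-2 prefix, exactly as in the single-layer Lemma~\ref{lem:self-O-eta}, but using causality of the whole stack rather than softmax monotonicity. Take $\mathbf{x}\in\mathrm{support}(\mathcal{D})$ with trigger $j\ge 3$, so position $2$ is a non-trigger, non-\texttt{BOS} token. By causality, every layer's output at positions $1$ and $2$ depends only on $\mathbf{x}^{(1)},\mathbf{x}^{(2)}$. Hence $f(\mathbf{x})^{(2)}$ and the coefficients $\beta_{2,1}(\mathbf{x}),\beta_{2,2}(\mathbf{x})$ from Lemma~\ref{lem:multilayer-unroll} are functions of $(\mathbf{x}^{(1)},\mathbf{x}^{(2)})=(e_1,\mathbf{x}^{(2)})$ only.

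The key steps are as follows.

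\textbf{Step 1.} Since $j\ge 3$, the target at position $2$ is $\mathbf{y}^{(2)}=\mathbf{0}$. The uniform loss bound therefore gives $\|f(\mathbf{x})^{(2)}\|_2\le\eta$.

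\textbf{Step 2.} By Lemma~\ref{lem:multilayer-unroll},
\[
f(\mathbf{x})^{(2)}=\beta_{2,1}(\mathbf{x})\,\mathbf{V}e_1+\beta_{2,2}(\mathbf{x})\,\mathbf{V}\mathbf{x}^{(2)},
\]
with $0\le\beta_{2,1}\le 1$. This is where causality enters: $\beta_{2,k}$ is defined only for $k\le 2$, so no later tokens appear.

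\textbf{Step 3.} Lemma~\ref{lem:bos-small-multilayer} gives $\|\mathbf{V}e_1\|_2\le\eta$. The triangle inequality then yields
\[
\big\|\beta_{2,2}\mathbf{V}\mathbf{x}^{(2)}\big\|_2\le\|f(\mathbf{x})^{(2)}\|_2+\beta_{2,1}\|\mathbf{V}e_1\|_2\le 2\eta.
\]

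The only point needing care is confirming the decomposition in Step~2. The multilayer model, as defined in the paper, has no residual connections, so Lemma~\ref{lem:multilayer-unroll} applies verbatim. In particular, $\mathbf{V}$ is the product of the per-layer value maps, so the same $\mathbf{V}$ appears in both Lemma~\ref{lem:bos-small-multilayer} and Step~2. I expect no real obstacle beyond this check.
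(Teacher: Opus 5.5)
Your proposal is correct and matches the paper's proof essentially line for line: the target at position $2$ is zero since $j\ge 3$, you use the decomposition from Lemma~\ref{lem:multilayer-unroll} at $i=2$, you bound the \texttt{BOS} term via $\beta_{2,1}\le 1$ and Lemma~\ref{lem:bos-small-multilayer}, and the triangle inequality gives $\eta+\eta=2\eta$. Your causality remarks about length-$2$ prefixes and the absence of residual connections are accurate, but they add nothing beyond what the unroll lemma already provides.
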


\begin{proof}
Since $j\ge 3$, position $2$ is pre-trigger and the target satisfies $\mathbf{y}^{(2)}=\mathbf{0}$.
By Lemma~\ref{lem:multilayer-unroll} with $i=2$,
\[
f(\mathbf{x})^{(2)}
=
\beta_{2,1}(\mathbf{x})\,\mathbf{V}e_1
+
\beta_{2,2}(\mathbf{x})\,\mathbf{V}\mathbf{x}^{(2)}.
\]
Thus
\begin{align*}
\big\|\beta_{2,2}(\mathbf{x})\,\mathbf{V}\mathbf{x}^{(2)}\big\|_2
&\le
\|f(\mathbf{x})^{(2)}\|_2 \\
&\quad+
\beta_{2,1}(\mathbf{x})\,\|\mathbf{V}e_1\|_2 \\
&\le
\eta+\eta \\
&= 2\eta,
\end{align*}
using $\|f(\mathbf{x})^{(2)}-\mathbf{y}^{(2)}\|_2\le \eta$, 
$\beta_{2,1}(\mathbf{x})\le 1$, and Lemma~\ref{lem:bos-small-multilayer}. 
\end{proof}

\section{Related Work}\label{sec:related_extended}

\paragraph{Theory and analyses of attention sinks.}
Several recent works study attention sinks directly, aiming to characterize why they arise, what they correlate with, and how they are implemented. \citet{ranmilo2026mechanisticaccountattentionsinks} identify one concrete sink-forming circuit in GPT-2 and show that sinks can persist when its specific components are absent, suggesting that sink behavior may be realized through different circuits across architectures; our results help explain this, since if sinks are functionally necessary, optimization may naturally find different architecture-specific ways to implement them. \citet{barbero2025llmsattendtoken} argue (theoretically and empirically) that first-token sinks can act as a stabilizing mechanism against over-mixing, and analyze how factors like depth, context length, and packing influence sink strength.”
\citet{cancedda2024spectralfiltersdarksignals} connect sink behavior to spectral structure in the vocabulary embedding/unembedding operators, attributing sinking to ``dark'' (tail-spectrum) components.
\citet{ruscio2025sinkinggeometricapproachattention} view sinks as learned ``reference-frame anchors'' in representation space and show that the resulting anchoring pattern depends strongly on architectural choices, especially the positional encoding.
\citet{queipodellano2026attentionsinkscompressionvalleys} connect attention sinks to ``compression valleys'' (layers where token representations become unusually low-entropy/compressed), showing both tend to emerge when the BOS token develops extremely large residual-stream activations. 
\citet{qiu2026unifiedsinks} study attention sinks together with ``residual sinks'' (persistent large activations in a few residual-stream dimensions) and argue these outliers interact with normalization (softmax/RMSNorm) to rescale the remaining components, supporting stable training.
\citet{sok2026garbageattention} treat strong BOS-focused heads—especially in later layers—as a marker of functional redundancy and propose a pruning criterion based on sink scores.
\citet{hong2025variance} attribute softmax-driven attention entropy collapse (attention concentrating onto a single token) to variance sensitivity of the logits and propose entropy-stable alternatives.
\citet{zhang2025catchtagrelease} link sink tokens to large-norm outlier directions in LLM representations and RoPE-focused analyses similarly tie sink behavior to structured frequency artifacts and Q/K ``massive values'' \citep{jin2025massivevalues,xiong2025dope}. These ``massive values'' were recently revisited in \citet{sun2026spike_sparse_sink}, which argues that massive activations and attention sinks are largely decoupled: spikes can be suppressed via normalization changes while sinks persist.
We complement these with a different angle: rather than studying how sinks emerge during training, we ask whether they are structurally \emph{necessary} for certain computations.
We prove that any softmax attention model solving a natural trigger-conditional task must develop a sink, regardless of the training procedure or optimization dynamics (\cref{thm:main,thm:multilayer}).

\paragraph{Softmax normalization implications.}
In standard attention, the softmax turns scores into nonnegative weights that sum to one.
\citet{richter2020normalizedattentionprobabilitycage} analyze how this simplex constraint can restrict attention behavior and discuss alternatives that relax or replace softmax normalization.
\citet{velickovic2024softmaxnotenough} prove that softmax-based mechanisms can fail to maintain increasingly sharp selection as the problem size grows, leading to degraded behavior under distribution shift when near-argmax behavior is required.
We provide a concrete natural task where this constraint is provably the cause of sink formation: a model that must aggregate context on a trigger token and output zero otherwise cannot avoid a sink under softmax normalization (\cref{thm:main}), whereas ReLU attention—which lacks the simplex constraint—solves the same task without any sink (\cref{thm:relu}).

\paragraph{Mitigating sinks.}
Alongside analyses, multiple papers propose sink-targeted interventions.
This includes modified attention normalizations explicitly designed to avoid sinks \citep{zuhri2025softpick,huang2026tda}, as well as training procedures tailored to long-context regimes, including sliding-window attention that explicitly addresses attention-sink issue\citep{fu2025swat}.
For inference-time efficiency, \citet{su2025kvsink,hosseini2026innerq} analyze how KV-cache quantization can disrupt sink behavior and propose predicting and preserving sink tokens during quantization.
Mitigation has also been studied for closely related collapse modes of attention: \citet{hong2025variance} analyze softmax-driven entropy collapse (attention concentrating onto a single token) and propose alternatives aimed at stabilizing attention entropy, while \citet{hankemeier2026diagonalsink} study diagonal/temporal self-attention sinks and introduce regularizers to counter them.
In a different setting, \citet{lookbothways2025nosink} show that attention sinks degrade training-free conversion of decoder-only LLMs into text encoders, and reduce this effect by enabling bidirectional attention and masking the first token in attention.
In multimodal and AV settings, sink patterns have similarly motivated mitigation strategies aimed at reducing hallucination and stabilizing activations \citep{zhang2024eah,anand2025avsr_sinks}. \citet{lu2025artifactsattentionsinks} analyze attention sinks as a structured artifact in Vision Transformers and leverage this structure to derive efficient approximation schemes. Moreover, in these settings, sinks have been explicitly regularized in the context of harmful fine-tuning \citep{liu2026surgery}. Sinks have also been studied in alignment and security contexts where \citet{shang2025forgetting} leverage sink behavior as a pathway for backdooring unlearning procedures. Finally, circuit-level interventions have also been explored in regimes where sink-related circuitry correlates with repeated-token failures \citep{yona2025repeatedtokens}.
Our necessity results offer a principled lens for evaluating such interventions: for trigger-conditional circuits, the sink is the mechanism enabling the computation, so strategies that operate \emph{within} softmax (penalizing BOS attention, spreading mass, post-hoc reweighting) risk degrading the circuit without addressing the root cause.
The contrast with ReLU attention (\cref{thm:relu,sec:implications}) suggests that relaxing the normalization constraint is the more fundamental direction.

\paragraph{Usefulness of sinks.}
Other work treats sinks as a useful computational primitive rather than an artifact to eliminate.
Our work formalizes this intuition: for trigger-conditional behaviors—where a model must aggregate context on a trigger while outputting zero elsewhere—the sink is not merely a convenient implementation choice but a \emph{provably necessary} consequence of softmax normalization (\cref{thm:main,thm:multilayer}).
\citet{zhang2025catchtagrelease} link sink tokens to representation outliers and argue that simple structural conditions (e.g., low-rank attention structure) can be sufficient to induce sinks that support concrete computations such as averaging and retrieval---a viewpoint that is closely aligned with our trigger-conditional setting.
Sinks have been argued to induce or support attention-layer specialization, including MoE-like effects within attention \citep{fu2026sinkmoe}.
\citet{sandovalsegura2025dormant} use sink dominance to identify ``dormant'' heads and validate their redundancy via head ablations.
In addition, BOS-sink heads have been treated as a locus of redundancy that can be targeted for model simplification via sink-aware pruning \citep{sok2026garbageattention}.
In large vision-language models \citep{luo2025sinksinkvisualinformation} show that high-norm ViT sink tokens encode high-level semantic concepts and serve as important visual information pathways into the LLM, and propose methods to better leverage them.
Related ideas appear in diffusion LMs as well, where introducing an explicit sink token is used to stabilize sink behavior across steps \citep{zhang2026onetoken} and where sink locations can be transient across denoising steps, motivating sink-aware pruning that targets unstable sinks \citep{myrzakhan2026sinkawarepruning}.

\begin{figure*}[!t]
  \centering
  \includegraphics[width=\textwidth]{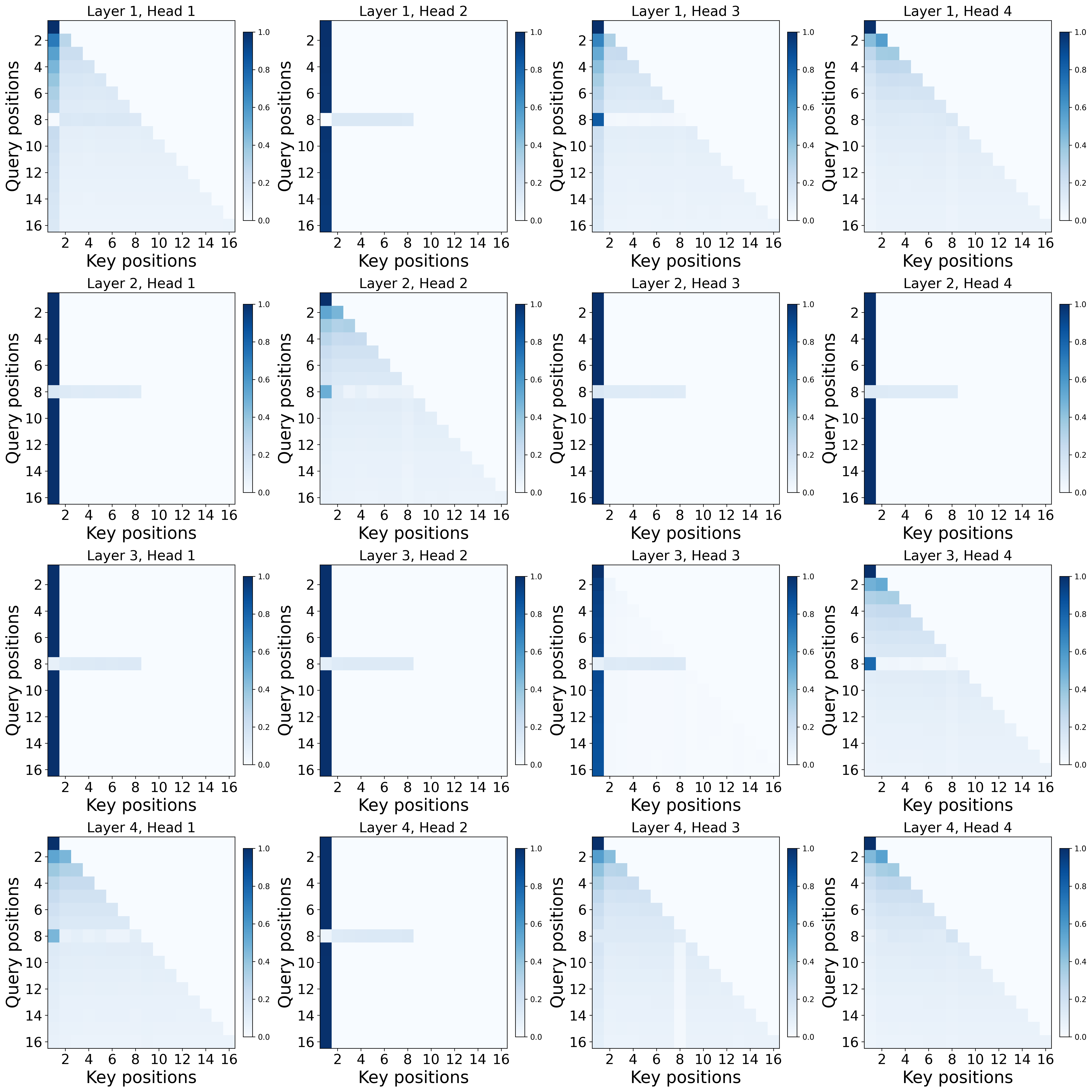}
  \caption{\textbf{Softmax attention: 4-layer 4-head model.} Representative attention patterns on a single test input showing strong sink at least in one head across all layers.}
  \label{fig:appendix-softmax-4h4d}
\end{figure*}

\begin{figure*}[!t]
  \centering
  \includegraphics[width=\textwidth]{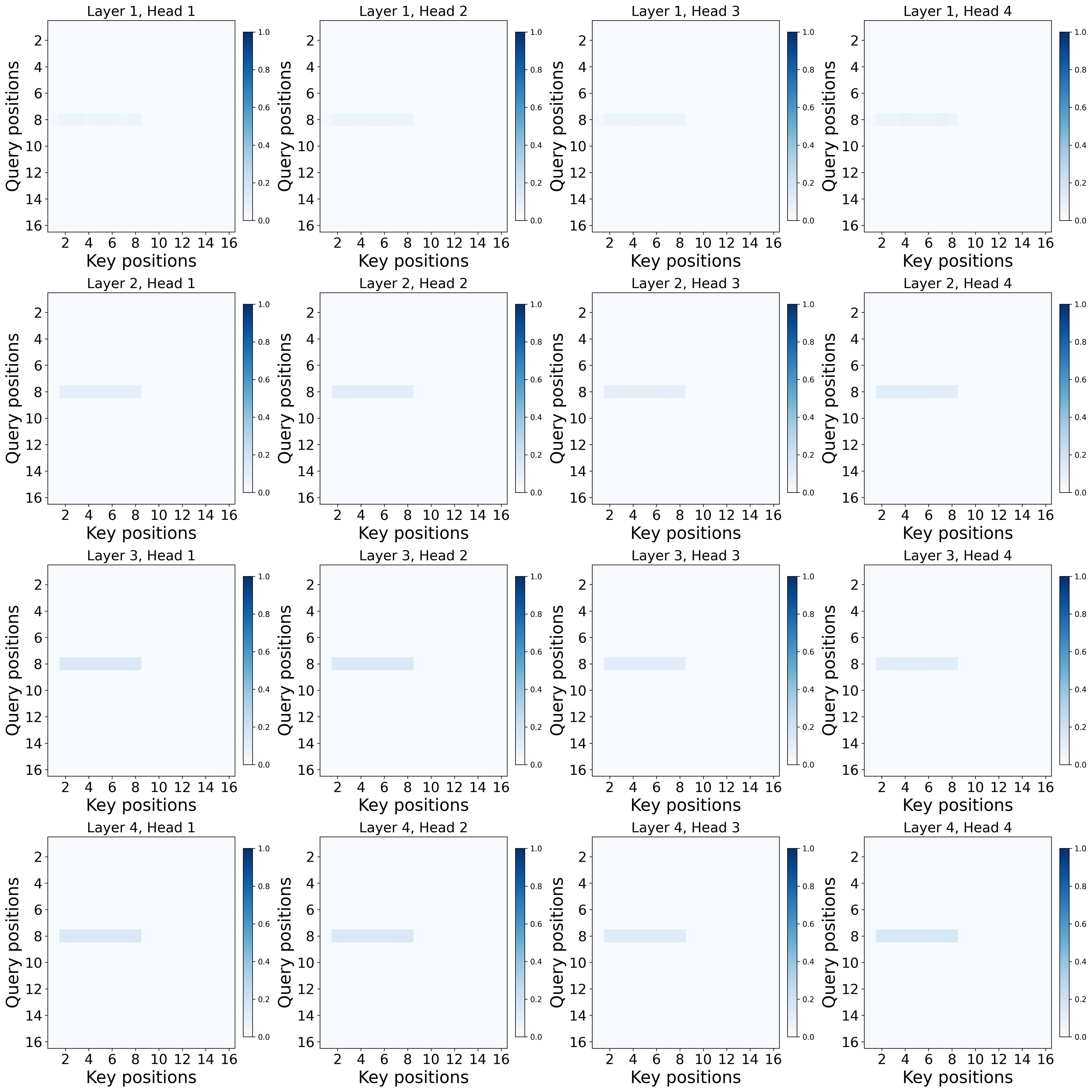}
  \caption{\textbf{ReLU attention: 4-layer 4-head model.} Representative attention patterns on a single test input showing absence of sink behavior across all layers.}
  \label{fig:appendix-relu-4h4d}
\end{figure*}

\end{document}